\titleformat{\section}{\large\bfseries}{\thesection}{1em}{}
\titleformat{\subsection}{\normalsize\bfseries}{\thesubsection}{1em}{}
\def\E{\ifmmode{\mathbb E}\else{$\mathbb E$}\fi} 
\def\N{\ifmmode{\mathbb N}\else{$\mathbb N$}\fi} 
\def\R{\ifmmode{\mathbb R}\else{$\mathbb R$}\fi} 
\def\Q{\ifmmode{\mathbb Q}\else{$\mathbb Q$}\fi} 
\def\C{\ifmmode{\mathbb C}\else{$\mathbb C$}\fi} 
\def\H{\ifmmode{\mathbb H}\else{$\mathbb H$}\fi} 
\def\Z{\ifmmode{\mathbb Z}\else{$\mathbb Z$}\fi} 
\def\P{\ifmmode{\mathbb P}\else{$\mathbb P$}\fi} 
\def\T{\ifmmode{\mathbb T}\else{$\mathbb T$}\fi} 
\def\SS{\ifmmode{\mathbb S}\else{$\mathbb S$}\fi} 
\def\DD{\ifmmode{\mathbb D}\else{$\mathbb D$}\fi} 
\newcommand{\bse}{\begin{subequations}}
\newcommand{\ese}{\end{subequations}}
\newcommand{\ben}{\begin{enumerate}}
\newcommand{\een}{\end{enumerate}}
\newcommand{\bens}{\begin{enumerate*}}
\newcommand{\eens}{\end{enumerate*}}
\newcommand{\be}{\begin{equation}}
\newcommand{\ee}{\end{equation}}
\newcommand{\bea}{\begin{eqnarray}}
\newcommand{\eea}{\end{eqnarray}}
\newcommand{\baa}{\begin{eqnarray*}}
\newcommand{\eaa}{\end{eqnarray*}}
\newcommand{\bc}{\begin{center}}
\newcommand{\ec}{\end{center}}
\newtheorem{theorem}{Theorem}
\theoremstyle{corollary}
\theoremstyle{lemma}
\newtheorem{lemma}{Lemma}
\theoremstyle{proposition}
\newtheorem{proposition}{Proposition}
\theoremstyle{axiom}
\theoremstyle{conjecture}
\theoremstyle{example}
\newtheorem{example}{Example}
\theoremstyle{definition}
\newtheorem{definition}{Definition}
\theoremstyle{remark}
\begin{document}

\title{\Large\bfseries Differentially private online Bayesian estimation\\ with adaptive truncation}


\author[1, 2]{Sinan Yıldırım}

\affil[1]{\normalsize Faculty of Engineering and Natural Sciences, Sabancı University, İstanbul, Turkey}
\affil[2]{\normalsize Center of Excellence in Data Analytics (VERİM), Sabancı University, İstanbul, Turkey}
\affil[1]{\small{\texttt{sinanyildirim@sabanciuniv.edu}}}


\maketitle

\begin{abstract}
We propose a novel online and adaptive truncation method for differentially private Bayesian online estimation of a static parameter regarding a population. We assume that sensitive information from individuals is collected sequentially and the inferential aim is to estimate, on-the-fly, a static parameter regarding the population to which those individuals belong. We propose sequential Monte Carlo to perform online Bayesian estimation. When individuals provide sensitive information in response to a query, it is necessary to perturb it with privacy-preserving noise to ensure the privacy of those individuals. The amount of perturbation is proportional to the sensitivity of the query, which is determined usually by the range of the queried information. The truncation technique we propose adapts to the previously collected observations to adjust the query range for the next individual. The idea is that, based on previous observations, we can carefully arrange the interval into which the next individual's information is to be truncated before being perturbed with privacy-preserving noise. In this way, we aim to design predictive queries with small sensitivity, hence small privacy-preserving noise, enabling more accurate estimation while maintaining the same level of privacy. To decide on the location and the width of the interval, we use an exploration-exploitation approach \emph{a la} Thompson sampling with an objective function based on the Fisher information of the generated observation. We show the merits of our methodology with numerical examples.

{\footnotesize \textbf{Keywords:} Differential privacy, Bayesian statistics, Sequential Monte Carlo, online learning, Thompson sampling, adaptive truncation}
\end{abstract}

\section{Introduction} \label{sec: Introduction}
During the past couple of decades, there has been a rapid increase in the amount of collected data as well as concerns about individuals' privacy. This has made privacy-preserving data analysis a popular and important subject in data science. Along the way, \emph{differential privacy} has become a popular framework for privacy-preserving data sharing algorithms \citep{Dwork_2006, Dwork_and_Roth_2013}.

There are two conflicting interests in privacy-preserving data analysis: (i) The individuals of a population who contribute to a data set with their sensitive information want to protect their privacy against all possible adversaries. (ii) Conflicting with that, it is desired to be able to estimate a common quantity of interest regarding the population based on sensitive data with reasonable accuracy. To put the conflict in a statistical context, we let $X_{t} \sim \mathcal{P}_{\theta}$ be the sensitive information of $t$'th individual of a sample randomly chosen from a large population with a population distribution $\mathcal{P}_{\theta}$. We want to estimate $\theta$ while also protecting the privacy of the individuals contributing to the sample, i.e., without revealing `much'  information about $X_{t}$s individually. 

In this paper, we are particularly interested in online Bayesian estimation of $\theta$ as we continually collect $Y_{1}, Y_{2}, \ldots$, which are the \emph{perturbed} versions of $X_{1}, X_{2}, \ldots$ respectively. The cases where individuals contribute to a data set continually are not rare: Imagine web users registering to a web application after entering their information, patients being admitted to a hospital, customers applying for a bank loan, etc. We address two interrelated questions:
\begin{itemize}
\item How can we improve the estimate of $\theta$ as we collect $Y_{1}, Y_{2}, \ldots$ continually?
\item As we estimate $\theta$, how can we continually adjust the privacy-preserving mechanism that generates $Y_{t}$ from $X_{t}$ so that the estimation performance is improved as $t$ increases?
\end{itemize}
 
Differentially private Bayesian inference of $\theta$ has been the subject of several recent studies, with Monte Carlo being the main methodological tool for inference. Stochastic gradient MCMC algorithms were proposed in \citet{wang_et_al_2015, Li_et_al_2019}, while reversible MCMC algorithms were proposed in \citet{Heikkila_et_al_2019, Yildirim_and_Ermis_2019, Raisa_et_al_2021}. Those algorithms require as many interactions with sensitive data as the number of iterations they run for. An alternative scheme to that is called input perturbation, where the sensitive data are perturbed and shared once and for all, and all the subsequent Bayesian inference is performed on the perturbed data without further interaction with the sensitive data \citep{foulds_et_al_2016, Williams_and_McSherry_2010, Karwa_et_al_2014, Bernstein_and_Sheldon_2018,  Park_et_al_2021, Gong_2022, Alparslan_and_Yildirim_2022, Ju_et_al_2022}.

All the cited works above consider differentially private Bayesian inference conditional on a batch (static) data set. Unlike those works, in this paper, we consider the case with continual observations, where data from the individuals are collected \emph{sequentially} in a privacy-preserving way. This scenario enables two methodological opportunities and/or challenges: 
\begin{enumerate}
\item One can (and/or should) estimate the static parameter on-the-fly, that is, update the estimate as data are being received. Differentially private estimation under continual observation has been the subject of several works that are initiated by \citet{Dwork_et_al_2010b}; other important contributions include \citet{Chan_et_al_2010, Cao_et_al_2017}. However, those works are usually applied to online tracking of dynamic summaries of data, such as the count of a certain property, rather than estimating a static parameter of the population from which the sensitive data are being received. In particular, they do not consider Bayesian estimation.
\item As we estimate the parameter, we can adaptively adjust the query for the next individual's information to make the response as \emph{informative} as possible. For example, if, based on the noisy income values collected so far from 100 individuals, we have estimated that the mean income of the population is around $\hat{\mu}$, we can ask the $101$'th individual to provide their income information after \emph{truncating} it to an interval around $\hat{\mu}$, such as $[\hat{\mu} - \Delta, \hat{\mu} + \Delta]$, and \emph{then} privatising it by adding noise to the (possibly) truncated value.

The motivation behind pursuing such an adaptive truncation technique is to improve the estimation performance with less noisy data while maintaining a given level of privacy. The standard deviation of the privacy-preserving noise added to the outcome of a query is proportional to the sensitivity of the query. By default, the queried information may be unbounded or have very large ranges, resulting in low utility. Continuing with the income example above, assume that the natural limits of an income are $[x_{\min}, X_{\max}]$ so that a query that directly asks for income information has a sensitivity of $X_{\max} - x_{\min}$, which is expectedly large. If adaptive truncation were used, instead, referring to the above example, the query interval for $101$'th individual would be $[\hat{\mu} - \Delta, \hat{\mu} + \Delta]$ with sensitivity $2 \Delta$.
\end{enumerate}

Truncation is considered in many works as a natural way to have finite sensitivity, see \citet{Heikkila_et_al_2017, Ju_et_al_2022} for examples of differentially private Bayesian estimation based on truncated data. Those works regard estimation based on batch data; adaptive truncation during online Bayesian learning, as done in this paper, is not considered.

This paper contributes to the literature on differential privacy by addressing the two challenges described above with a novel methodology. For the first challenge, that is, online Bayesian estimation of $\theta$, we propose a sequential Monte Carlo (SMC) method for static parameter estimation as studied in \citet{gilks_and_berzuini_2001, Chopin_2002}. For the second challenge, we propose a novel adaptive truncation method that employs an \emph{exploration-exploitation} heuristic to maximise the aggregate `information' in the sequence of observations $Y_{1}, Y_{2}, \ldots$ about $\theta$. To measure the amount of `information', we choose the Fisher information as suggested in \citet{Alparslan_and_Yildirim_2022}. As we show in Section \ref{sec: Adaptive truncation for the transformation}, the \emph{exploration} part of the proposed approach can be seen as an instance of Thompson sampling \citep{Russo_et_al_2018} from reinforcement learning. The \emph{exploitation} part consists of finding the truncation points that make the resulting observations most informative in terms of Fisher information. Finally, for the \emph{exploitation} step, we pay special attention to \emph{location-scale} families and show that the maximisation task can be performed for all time steps once and for all. To the best of our knowledge, this is the first work that tackles the problem of online differentially private Bayesian estimation with adaptive queries.

The paper is organised as follows. In Section \ref{sec: Differential Privacy}, we introduce the basic concepts of differential privacy. In Section \ref{sec: Adaptive differentially private parameter estimation}, we discuss the problem of online parameter estimation using privatised noisy statistics of the sensitive data and present our methodology in general. In Sections \ref{sec: Sequential Monte Carlo for Bayesian estimation} and \ref{sec: Adaptive truncation for the transformation}, we describe the details of our methodology. In Section \ref{sec: Numerical results} we present the results of some numerical experiments. Finally, we give our concluding remarks in Section \ref{sec: Conclusion}. This paper has an Appendix section for some deferred details.

\section{Differential Privacy} \label{sec: Differential Privacy}

Let $\mathcal{X}$ be a set of individual data values and $\mathscr{X} = \bigcup_{n = 1}^{\infty} \mathcal{X}^{n}$ be the set of all possible data sets.  Define the Hamming distance between the data sets $\bm{x}, \bm{x}' \in \mathscr{X}$ as the number of different elements between those data sets, denoted by $h(\bm{x}, \bm{x}')$. We call two data sets $\bm{x}, \bm{x}' \in \mathscr{X}$ neighbours if $h(\bm{x}, \bm{x}') = 1$. A \emph{randomised algorithm} can be defined as a couple $\mathcal{A} = (A, \mu)$, where $A: \mathscr{X} \times \mathcal{E} \mapsto \mathcal{Y}$ is a function and $\mu$ is a probability distribution on $\mathcal{E}$, which represents the randomness intrinsic to $\mathcal{A}$. Upon taking an input $\bm{x} \in \mathscr{X}$, the randomised algorithm $\mathcal{A}$ generates random numbers $\omega \sim \mu(\cdot)$ in  $\mathcal{E}$ and outputs $A(\bm{x}, \omega)$. Differential privacy \citep{Dwork_2006} quantifies a certain sense of similarity between random outputs $A(\bm{x}, \omega)$ and $A(\bm{x}', \omega)$ when $\bm{x}$ and $\bm{x}'$ are neighbours.
\begin{definition}[Differential privacy (DP)] \label{defn: Differential privacy}
A randomised algorithm $\mathcal{A} = (A, \mu)$ is $(\epsilon, \delta)$-DP if 
\[
\mathbb{P} \left[ A(\bm{x}, \omega)\in S \right]  \leq e^{\epsilon} \mathbb{P} \left[ \mathcal{A}(\bm{x}', \omega)\in S \right] + \delta, \quad \forall \bm{x}, \bm{x}' \in \mathscr{X} s.t.\ h(\bm{x}, \bm{x}') = 1, \quad \forall S \subseteq \mathcal{Y},
\] 
where the randomness is with respect to $\omega \sim \mu(\cdot)$. We say $\mathcal{A}$ is $\epsilon$-DP when $\delta = 0$.
\end{definition}
As far as privacy is concerned, both privacy parameters $(\epsilon, \delta)$ are desired to be as small as possible. The following theorem states that $(\epsilon, \delta)$-DP is maintained by post-processing the output of an $(\epsilon, \delta)$-DP algorithm.
\begin{theorem}[Post-processing] \label{thm: post-processing}
Define functions $A_{1}: \mathscr{X} \times \mathcal{E}_{1} \mapsto \mathcal{Y}_{1}$ and $A_{2}: \mathcal{Y}_{1} \times \mathcal{E}_{2} \mapsto \mathcal{Y}_{2}$; and probability distributions $\mu_{1}$, $\mu_{2}$ on $\mathcal{E}_{1}$, $\mathcal{E}_{2}$, respectively. Furthermore, let $A: \mathscr{X} \times \mathcal{E}_{1} \times \mathcal{E}_{2} \mapsto \mathcal{Y}_{2}$ be defined by $A(\bm{x}, \omega_{1}, \omega_{2}) = A_{2} (A_{1}(\bm{x}, \omega_{1}), \omega_{2})$, and $\mu = \mu_{1} \otimes \mu_{2}$. Then, if $\mathcal{A}_{1} = (A_{1}, \mu_{1})$ is $(\epsilon, \delta)$-DP, $\mathcal{A} = (A, \mu)$ is $(\epsilon, \delta)$-DP, too.
\end{theorem}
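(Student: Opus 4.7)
The plan is to exploit the product structure of the noise distribution $\mu = \mu_1 \otimes \mu_2$, condition on the second source of randomness $\omega_2$, and then apply the $(\epsilon,\delta)$-DP property of $\mathcal{A}_1$ to a suitably chosen preimage set in $\mathcal{Y}_1$.

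First I would fix arbitrary neighbouring datasets $\bm{x}, \bm{x}' \in \mathscr{X}$ with $h(\bm{x}, \bm{x}') = 1$ and an arbitrary measurable set $S \subseteq \mathcal{Y}_2$. The goal is to bound $\mathbb{P}[A(\bm{x}, \omega_1, \omega_2) \in S]$ in terms of $\mathbb{P}[A(\bm{x}', \omega_1, \omega_2) \in S]$. For each fixed $\omega_2 \in \mathcal{E}_2$, I would define the preimage
\[
T_{\omega_2} = \{y_1 \in \mathcal{Y}_1 \,:\, A_2(y_1, \omega_2) \in S\} \subseteq \mathcal{Y}_1,
\]
so that $\{A(\bm{x}, \omega_1, \omega_2) \in S\} = \{A_1(\bm{x}, \omega_1) \in T_{\omega_2}\}$, and similarly for $\bm{x}'$.

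Next I would disintegrate the probability using the product measure $\mu_1 \otimes \mu_2$ and Fubini's theorem:
\[
\mathbb{P}[A(\bm{x}, \omega_1, \omega_2) \in S] = \int_{\mathcal{E}_2} \mathbb{P}_{\omega_1 \sim \mu_1}\!\left[ A_1(\bm{x}, \omega_1) \in T_{\omega_2} \right] d\mu_2(\omega_2).
\]
Now for each fixed $\omega_2$, $T_{\omega_2}$ is a measurable subset of $\mathcal{Y}_1$, so the $(\epsilon, \delta)$-DP assumption on $\mathcal{A}_1$ applied to the event $\{A_1(\cdot, \omega_1) \in T_{\omega_2}\}$ yields
\[
\mathbb{P}_{\omega_1 \sim \mu_1}\!\left[ A_1(\bm{x}, \omega_1) \in T_{\omega_2} \right] \leq e^{\epsilon}\, \mathbb{P}_{\omega_1 \sim \mu_1}\!\left[ A_1(\bm{x}', \omega_1) \in T_{\omega_2} \right] + \delta.
\]
Integrating both sides over $\omega_2 \sim \mu_2$ and using $\int d\mu_2 = 1$ gives
\[
\mathbb{P}[A(\bm{x}, \omega_1, \omega_2) \in S] \leq e^{\epsilon}\, \mathbb{P}[A(\bm{x}', \omega_1, \omega_2) \in S] + \delta,
\]
which is exactly the condition of Definition \ref{defn: Differential privacy} for $\mathcal{A} = (A, \mu)$.

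The main obstacle is measurability bookkeeping: one needs $T_{\omega_2}$ to be measurable (which follows from measurability of $A_2$) and the map $\omega_2 \mapsto \mathbb{P}[A_1(\bm{x}, \omega_1) \in T_{\omega_2}]$ to be measurable so that Fubini applies. Under the standard implicit assumptions that $A_1, A_2$ are measurable and the spaces $\mathcal{Y}_1, \mathcal{Y}_2, \mathcal{E}_1, \mathcal{E}_2$ carry appropriate $\sigma$-algebras (typically Polish / standard Borel), this is routine, and I would not belabour it in the writeup. Everything else is a straightforward conditioning argument.
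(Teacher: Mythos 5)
Your argument is correct: fixing neighbouring $\bm{x},\bm{x}'$ and a target set $S$, conditioning on $\omega_{2}$, applying the $(\epsilon,\delta)$-DP guarantee of $\mathcal{A}_{1}$ to the preimage $T_{\omega_{2}}=\{y_{1}: A_{2}(y_{1},\omega_{2})\in S\}$, and integrating over $\mu_{2}$ (which is where the independence $\mu=\mu_{1}\otimes\mu_{2}$ and the fact that $\mu_{2}$ is a probability measure are used, so the additive $\delta$ survives unchanged) is exactly the standard post-processing argument. The paper itself states this theorem without proof, treating it as a known result from the differential privacy literature (cf.\ Dwork and Roth), and your conditioning/Fubini proof is equivalent to the usual presentation of that result as a convex combination of deterministic post-processings; your remark on the measurability bookkeeping is the right level of care and nothing further is needed.
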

Let $\varphi: \mathscr{X} \mapsto \mathbb{R}$ be a function and assume that $\varphi(\bm{x})$ is queried. One common way of achieving differential privacy, in this case, is the \emph{Laplace mechanism} \citep{Dwork_2008}, which relies on the \emph{$L_{1}$-sensitivity} of $\varphi$, given by
\begin{equation} \label{eq: L1 sensitivity}
\Delta \varphi = \sup_{\substack{\bm{x}, \bm{x}' \in \mathscr{X}: \\ h(\bm{x}, \bm{x}') = 1}} \vert \varphi(\bm{x}) - \varphi(\bm{x}') \vert.
\end{equation}

\begin{theorem}[Laplace mechanism] \label{thm: Laplace mechanism}
The algorithm that returns $\varphi(\bm{x}) + \Delta \varphi  V$ given the input $\bm{x} \in \mathscr{X}$, where $V \sim \textup{Laplace}(1 /\epsilon)$, is $\epsilon$-DP.
\end{theorem}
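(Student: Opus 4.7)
The plan is to bound the likelihood ratio of the output density between neighbouring inputs pointwise and then integrate to obtain the event-level bound required by Definition \ref{defn: Differential privacy}. Since $V \sim \textup{Laplace}(1/\epsilon)$ has density $f_V(v) = (\epsilon/2) \exp(-\epsilon |v|)$, a straightforward change of variables applied to $Y = \varphi(\bm{x}) + \Delta \varphi \cdot V$ shows that, under input $\bm{x}$, the output admits the density
\[
p_{\bm{x}}(y) = \frac{\epsilon}{2 \Delta \varphi} \exp\!\left(-\frac{\epsilon |y - \varphi(\bm{x})|}{\Delta \varphi}\right), \qquad y \in \mathbb{R}.
\]

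Next, I would fix an arbitrary pair of neighbours $\bm{x}, \bm{x}' \in \mathscr{X}$ with $h(\bm{x}, \bm{x}') = 1$, pick any $y \in \mathbb{R}$, and form the ratio
\[
\frac{p_{\bm{x}}(y)}{p_{\bm{x}'}(y)} = \exp\!\left(\frac{\epsilon\,(|y - \varphi(\bm{x}')| - |y - \varphi(\bm{x})|)}{\Delta \varphi}\right).
\]
The central analytical step is the reverse triangle inequality, which gives $\bigl| |y - \varphi(\bm{x}')| - |y - \varphi(\bm{x})| \bigr| \leq |\varphi(\bm{x}) - \varphi(\bm{x}')|$. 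By the definition of $L_1$-sensitivity in \eqref{eq: L1 sensitivity}, the right-hand side is at most $\Delta \varphi$ precisely because $\bm{x}$ and $\bm{x}'$ are neighbours. Thus the exponent lies in $[-\epsilon, \epsilon]$, and the pointwise density bound $p_{\bm{x}}(y) \leq e^{\epsilon}\, p_{\bm{x}'}(y)$ follows.

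Finally, I would integrate this pointwise inequality over any measurable $S \subseteq \mathbb{R}$ to obtain $\mathbb{P}[A(\bm{x}, \omega) \in S] \leq e^{\epsilon}\, \mathbb{P}[A(\bm{x}', \omega) \in S]$, which is exactly $\epsilon$-DP (the $\delta = 0$ case of Definition \ref{defn: Differential privacy}). I do not anticipate any serious obstacle: the argument amounts to one application of the reverse triangle inequality once the output density is written down. The only subtlety worth checking is that the argument is symmetric in $(\bm{x}, \bm{x}')$, so that the same bound applies in both directions; this is immediate because the neighbour relation and the role of $\varphi$ are symmetric in the two datasets.
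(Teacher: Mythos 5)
Your proof is correct: the change of variables gives the stated Laplace output density, the reverse triangle inequality combined with the sensitivity definition in \eqref{eq: L1 sensitivity} bounds the density ratio by $e^{\epsilon}$ pointwise, and integrating over $S$ yields the $\delta = 0$ case of Definition \ref{defn: Differential privacy}. The paper itself states this theorem without proof (it is a classical result cited from the differential privacy literature), and your argument is exactly the standard one, so there is nothing to reconcile; the only degenerate case you might mention in passing is $\Delta\varphi = 0$, where the output is data-independent and the bound holds trivially.
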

Other useful definitions of data privacy have close relations to differential privacy. Some important examples are Gaussian differential privacy \citep{Dong_et_al_2022} and zero-concentrated differential privacy \citep{Bun_and_Steinke_2016}, both of which promote the \emph{Gaussian mechanism} \citep{Dwork_and_Roth_2013} (where $V$ in Theorem \ref{thm: Laplace mechanism} has a normal distribution) as its primary mechanism for providing privacy. The Gaussian mechanism can also provide $(\epsilon, \delta)$-DP for $\delta > 0$ if the variance is modified to depend on $\delta$ also.

For the rest of the paper, we will consider the Laplace mechanism to provide $\epsilon$-DP for sake of simplicity. We remark, however, that other additive mechanisms to provide privacy in other senses also fit into our methodology with minor changes. In particular, our methodology applies to the Gaussian mechanism in an almost identical manner. 

\section{Differentially private parameter estimation with adaptive queries} \label{sec: Adaptive differentially private parameter estimation}

Assume a sequence of i.i.d.\ data points
\[
X_{t} \overset{\textup{i.i.d.}}{\sim} \mathcal{P}_{\theta}, \quad t \geq 1,
\]
where $X_{t}$ is some sensitive information that belongs to the $t$'th individual sampled from a population. We want to estimate the unknown parameter $\theta$ of the population distribution $\mathcal{P}_{\theta}$. However, we are not allowed to access to $X_{t}$'s directly; instead, individuals share their information through a function $s_{t}: \mathcal{X} \mapsto \mathbb{R}$ and with privacy-preserving noise as
\begin{equation} \label{eq: noisy observations}
Y_{t} = s_{t}(X_{t}) + \Delta s_{t}  V_{t}, \quad V_{t} \overset{\textup{i.i.d.}}{\sim} \textup{Laplace}(1/\epsilon), \quad t \geq 1,
\end{equation}
where $\Delta s_{t}$ is the sensitivity defined as in \eqref{eq: L1 sensitivity}, that is,
\[
\Delta s_{t} =  \sup_{x, x' \in \mathcal{X}} \vert s_{t}(x) - s(x') \vert.
\]
 We consider online estimation of $\theta$ when $\{Y_{t} \}_{t \geq 1}$ are observed sequentially in time. The recursion that corresponds to a sequential estimation procedure can be written down generically as
\[
\Theta_{t} = G(\Theta_{t-1}, Y_{1:t}, s_{1:t}).
\]
The update function $G$ produces $\Theta_{t}$ using all the information up to time $t$, which includes $\Theta_{t-1}$, the functions $s_{1:t}$, and the observations $Y_{1:t}$. Generally, $\Theta_{t}$ is not necessarily a point estimate but a collection of variables needed to construct the estimation of $\theta$ at time $t$. For example, in SMC for Bayesian estimation, $\Theta_{t}$ can correspond to the particle system at time $t$. Details of such an algorithm will be provided in Section \ref{sec: Sequential Monte Carlo for Bayesian estimation}.

This paper focuses on the question of whether it is possible to choose $s_{t}$ \emph{adaptively} so that $\theta$ is estimated with improved accuracy relative to its non-adaptive counterpart. The choice of $s_{t}$ is important because $s_{t}$ determines how much information is contained in $Y_{t}$ about $\theta$ in two ways \citep{Alparslan_and_Yildirim_2022}: 
\begin{itemize}
\item The first way is related to the sufficiency or informativeness of $s_{t}$ in the classical sense. For example, let $\mathcal{P}_{\theta} = \mathcal{N}(\theta, 1)$ with an unknown mean $\theta$. Then, discarding the privacy-preserving noise, $s_{t}(x_{t}) = x_{t}$ would be a better choice than $s_{t}(x_{t}) = |x_{t}|$  since $|x_{t}|$ masks the information that is contained in $x_{t}$ about $\theta$. 
\item Secondly,  the standard deviation of the privacy-preserving noise is proportional to the sensitivity $\Delta s_{t}$. A mild truncation results in a large $\Delta s_{t}$, which necessitates too much privacy-preserving noise. (As an extreme case, think of an unbounded $s_{t}$). On the flip side, making $\Delta s_{t}$ too small could result in a small amount of information in $s_{t}(X_{t})$ about $\theta$. (Imagine a constant $s_{t}(\cdot)$, which has $\Delta s_{t} = 0$ but carries no information about $\theta$.) Therefore, truncation and sensitivity establish a trade-off. Below, we exemplify the trade-off when $s_{t}$ is a truncation function.
\end{itemize}

\begin{example}
Assume that our goal is to learn the average income $\theta$ of the individuals in a given population, with a population distribution $\mathcal{N}(\theta, \sigma^{2})$, where $\sigma^{2}$ is known. Assume that data is collected from (some of) the individuals in this population in a sequential way.  However, since the income information is sensitive, each individual's income is recorded (or shared by the individual) with privacy-preserving noise as in \eqref{eq: noisy observations}. Consider the specific choice
\[
Y_{t} = \min\{ \max\{ X_{t}, l \}, r \} + (r - l) V_{t}, \quad V_{t} \sim  \textup{Laplace}\left(1/ \epsilon \right).
\]
If the interval $[l, r]$ is wide, true income $X_{t}$ is not likely to be truncated but $Y_{t}$ suffers a large noise for ensuring the given level of privacy. On the other hand, if $[l, r]$ is small, $X_{t}$ is likely to be truncated but $Y_{t}$ is less noisy. This makes a trade-off between truncation and privacy-preserving noise, the two undesired components in terms of statistical inference. 

It would therefore be reasonable to carefully adjust the interval adaptively as we collect data, where the interval for receiving the $t$'th individual's data is denoted by $[l_{t}, r_{t}]$. Intuitively, we would aim to set $[l_{t}, r_{t}]$ in such a way that it will likely contain the true value and it is small so that the required privacy-preserving noise has a small variance. If $\theta$ is a location parameter, we could do that by positioning $[l_{t}, r_{t}]$ around the most recent estimate of $\theta$. 
\end{example}

The general online estimation method with adaptive functions $s_{t}$ is given in Algorithm \ref{alg: Differentially private online learning}. The algorithm outlines the general idea in this paper: We gain knowledge about $\theta$ as we observe $Y_{t}$'s; which we use to adapt the statistic $s_{t+1}$ such that the new observation $Y_{t+1}$ carries more information about $\theta$ than it would with an arbitrary choice of $s_{t+1}$. 

Algorithm \ref{alg: Differentially private online learning} is $\epsilon$-DP. Each observation $Y_{t}$ belongs to an individual and is shared with $\epsilon$-DP. Furthermore, all the updates in Algorithm \ref{alg: Differentially private online learning} are performed using the shared data $\{ Y_{t} \}_{t \geq 1}$ and \emph{not} the private data $\{ X_{t} \}_{t \geq 1}$. Therefore, by Theorem \ref{thm: post-processing}, those updates do not introduce any further privacy leaks. A more formal statement in Proposition \ref{alg: Differentially private online learning}, a proof can be found in Appendix \ref{sec: Proof of Proposition 1}.

\begin{algorithm}[t]
\caption{Differentially private online learning - general scheme}
\label{alg: Differentially private online learning}
Initialise the estimation system $\Theta_{0}$ and $s_{1}(\cdot)$.\\

\For{$t = 1, 2, \ldots$}{
The function $s_{t}$ is revealed to individual $t$, which shares his/her data $X_{t}$ as
\[
Y_{t} = s_{t}(X_{t}) + \Delta s_{t} V_{t}, \quad V_{t} \sim \textup{Laplace}\left( 1/ \epsilon \right).
\]

Update the estimation system $\Theta_{t}$ as
\begin{equation} \label{eq: update theta general}
\Theta_{t} = G(\Theta_{t-1}, Y_{1:t}, s_{1:t}).
\end{equation}

Update the new function
\begin{equation} \label{eq: update s general}
s_{t+1} = H(\Theta_{t}).
\end{equation}
}
\end{algorithm}

\begin{proposition} \label{prop: DP of Alg 1}
Algorithm \ref{alg: Differentially private online learning} is $\epsilon$-DP.
\end{proposition}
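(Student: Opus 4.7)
The plan is to reduce the claim to a single application of the post-processing theorem (Theorem \ref{thm: post-processing}) after isolating the unique step at which the Laplace mechanism actually operates on the input whose privacy we wish to protect. Fix any two neighbouring data streams $\bm{x}, \bm{x}' \in \mathscr{X}$ that differ only at a single index $t^{\ast}$. I want to show that the output produced by Algorithm \ref{alg: Differentially private online learning} (up to any horizon) is $\epsilon$-DP with respect to this single-coordinate change.

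First I would verify, by a straightforward induction on $t$, that for every $t < t^{\ast}$ the quantities $Y_{t}$, $\Theta_{t}$ and $s_{t+1}$ depend only on $X_{1:t}$ and the Laplace noise $V_{1:t}$; hence they coincide identically under $\bm{x}$ and $\bm{x}'$. In particular, the query $s_{t^{\ast}}$ that is presented to individual $t^{\ast}$ is common to the two runs and carries no information about $X_{t^{\ast}}$.

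At step $t^{\ast}$ the map $s_{t^{\ast}}$ is therefore a fixed query of $L_{1}$-sensitivity $\Delta s_{t^{\ast}}$, and so $Y_{t^{\ast}} = s_{t^{\ast}}(X_{t^{\ast}}) + \Delta s_{t^{\ast}} V_{t^{\ast}}$ is exactly the Laplace mechanism of Theorem \ref{thm: Laplace mechanism} applied to $X_{t^{\ast}}$, which satisfies the $\epsilon$-DP inequality of Definition \ref{defn: Differential privacy}. For $t > t^{\ast}$, the update rules \eqref{eq: update theta general} and \eqref{eq: update s general} show that every subsequent $\Theta_{t}$ and $s_{t+1}$ is a deterministic function of $Y_{1:t}$ and $s_{1:t}$ alone, never of the raw $X_{t^{\ast}}$; and each subsequent $Y_{t}$ depends only on $s_{t}$, on $X_{t}$ (with $t \neq t^{\ast}$), and on fresh noise $V_{t}$. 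Consequently, the remainder of the output stream can be written as a randomised post-processing of $Y_{t^{\ast}}$ whose auxiliary randomness, built from $\{X_{t}\}_{t > t^{\ast}}$ and $\{V_{t}\}_{t > t^{\ast}}$, is independent of $X_{t^{\ast}}$. Applying Theorem \ref{thm: post-processing} with $\mathcal{A}_{1}$ the Laplace release of $Y_{t^{\ast}}$ and $\mathcal{A}_{2}$ the deterministic cascade producing every other output, the joint output inherits $\epsilon$-DP with respect to $X_{t^{\ast}}$; since $t^{\ast}$ was arbitrary, the proposition follows.

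The only non-trivial point, and thus the main obstacle, is the bookkeeping needed to confirm that $X_{t^{\ast}}$ is never re-read after time $t^{\ast}$ through the adaptive choices of $s_{t}$. This is immediate from \eqref{eq: update theta general}–\eqref{eq: update s general}, which route every adaptive decision through $Y_{1:t}$ rather than $X_{1:t}$; once this is observed, the proof collapses to a single invocation of post-processing, and in particular the adaptivity of the query schedule costs nothing in terms of the privacy budget.
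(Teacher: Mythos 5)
Your argument is correct in substance but follows a genuinely different route from the paper. The paper's proof is a direct likelihood-ratio computation: it writes the joint distribution of the whole revealed transcript $R_{n} = (y_{1:n}, s_{1:n}, \theta_{1:n})$ conditional on $x_{1:n}$, notes that all factors coming from the updates \eqref{eq: update theta general}--\eqref{eq: update s general} do not involve $x_{1:n}$, and bounds the ratio for neighbouring datasets by the single surviving Laplace factor at the differing index, giving $e^{-\epsilon} < \cdot < e^{\epsilon}$. You instead localise the privacy cost structurally: couple the noise so that the prefix before $t^{\ast}$ coincides under $\bm{x}$ and $\bm{x}'$, treat the release of $Y_{t^{\ast}}$ as one Laplace mechanism, and absorb everything afterwards into post-processing. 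Your route makes the conceptual point (adaptivity is free because every adaptive decision is routed through released quantities) very transparent, while the paper's computation is self-contained and avoids having to stretch the stated theorems. Two points in your sketch need care to be rigorous. First, Theorem \ref{thm: post-processing} as stated does not let $A_{2}$ re-read the dataset, whereas your ``cascade'' must use $x_{t}$ for $t \neq t^{\ast}$ and must take the whole prefix transcript (not just $Y_{t^{\ast}}$) as input; this is legitimate only because those coordinates are identical in the two neighbouring datasets and can be hard-coded into the post-processing map for the pair under consideration, and because DP is verified pairwise---you should say this explicitly rather than describe $\{X_{t}\}_{t > t^{\ast}}$ as ``auxiliary randomness'' (in the DP setting the dataset is fixed; only the mechanism's noise is random). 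Second, the query $s_{t^{\ast}}$ is itself random, as it depends on the prefix noise, so ``exactly the Laplace mechanism of Theorem \ref{thm: Laplace mechanism}'' holds only conditionally on the prefix; one must then integrate the conditional $e^{\epsilon}$ bound over the prefix randomness, which is precisely the cancellation the paper's density-ratio argument performs in one line. With those two clarifications your proof is a valid alternative to the one in Appendix \ref{sec: Proof of Proposition 1}.
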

In Sections \ref{sec: Sequential Monte Carlo for Bayesian estimation} and \ref{sec: Adaptive truncation for the transformation}, we describe the methods for the updates in \eqref{eq: update theta general} and in \eqref{eq: update s general}.

\section{Sequential Monte Carlo for online Bayesian estimation} \label{sec: Sequential Monte Carlo for Bayesian estimation}
In this section, we focus on  $G$ in \eqref{eq: update theta general} in Algorithm \ref{alg: Differentially private online learning}, which stands for the parameter estimation update upon receiving a new observation. We consider the functions $s_{t}$ given and present an SMC method for online Bayesian estimation of $\theta$. SMC is a popular numerical method for online Bayesian inference; see \citet{gilks_and_berzuini_2001, Chopin_2002} for some pioneer works. Let $p_{\theta}(\cdot)$ be the probability density (or mass) function (pdf or pmf) of $\mathcal{P}_{\theta}$. With a prior distribution $\eta(\theta)$ on $\theta$, the following sequence of posterior distributions is targeted sequentially with SMC.
\begin{equation}
\begin{aligned} 
 p_{s_{1:t}}^{\epsilon}(\theta, x_{1:t} | y_{1:t}) &\propto p_{s_{1:t}}^{\epsilon}(\theta, x_{1:t}, y_{1:t}) \\
&= \eta(\theta) \prod_{k = 1}^{t} p_{\theta}(x_{k}) \textup{Laplace}\left(y_{k} - s_{k}(x_{k}), \Delta s_{k} / \epsilon \right), \quad t= 1, \ldots, n,
\end{aligned}
\label{eq: sequence of posteriors}
\end{equation}
where we used $\textup{Laplace}(\cdot; b)$ to denote the pdf of $\textup{Laplace}(b)$. A Monte Carlo approximation is necessary for those posterior distributions since they are intractable having no closed form. At time $t$, SMC approximates the posterior distribution in \eqref{eq: sequence of posteriors} with a discrete probability distribution having $N > 1$ particles (points of mass) $\{ (\theta^{(i)}, x_{1:t}^{(i)}); i = 1, \ldots, N\}$ with particle weights $\{ w_{t}^{(i)}; i = 1, \ldots, N\}$ as
\[
p^{\epsilon, N}_{s_{1:t}}(\mathrm{d} (\theta, x_{1:t}) | y_{1:t})  = \sum_{i = 1}^{N} w_{t}^{(i)} \delta_{(\theta^{(i)}, x_{1:t}^{(i)})}(\mathrm{d} (\theta, x_{1:t})).
\]
By marginalising out the $x_{1:t}$ component in the above approximation, we can also obtain the particle approximation of the marginal posterior distribution of $\theta$ given the observations.
\begin{equation} \label{eq: particle approximation of theta}
p^{\epsilon, N}_{s_{1:t}}(\mathrm{d} \theta | y_{1:t})  = \sum_{i = 1}^{N} w_{t}^{(i)} \delta_{\theta^{(i)}}(\mathrm{d} \theta).
\end{equation}
At time-step $t$, the particles and their weights from time $t-1$ are updated after the \emph{resampling}, \emph{rejuvenation}, \emph{propagation}, and \emph{weighting} steps. The propagation and weighting steps are necessary to track the evolving posterior distributions, while the rejuvenation and resampling steps prevent the particle approximation from collapsing to a single point. The update at a single time-step of SMC is detailed in Algorithm \ref{alg: SMC} (time indices of particles are omitted for ease of exposition). The algorithm is an instance of the resample-move algorithm of \citet{gilks_and_berzuini_2001}, specified for the sequence of posteriors in \eqref{eq: sequence of posteriors}. The most common resampling step is multinomial sampling, where the $N$ new particles are sampled independently according to their weights. For the rejuvenation step, one common type of MCMC move consists of (i) an update of $x_{k}$, $k = 1,\ldots, t$, with a Metropolis-Hastings (MH) move with invariant distribution $p_{\theta, s_{k}}(x_{k} | y_{k}) = p_{\theta}(x_{k}) p_{s_{k}}^{\epsilon}(y_{k} | x_{k})$, which is followed by (ii) an update of $\theta$ using an MH move with invariant distribution $p(\theta | x_{1:t}) \propto \eta(\theta) \prod_{k = 1}^{t} p_{\theta}(x)$. One such MCMC move is shown in Algorithm \ref{alg: MCMC} in Appendix \ref{sec: Supplementary algorithms}.  

The computational cost of SMC for processing $n$ observations is $\mathcal{O}(N n^{2})$ in general, since an $\mathcal{O}(t N)$ operation is needed to rejuvenate the particles at time $t$. The cost may be reduced in some cases: The cost for updating $x_{1:t}$ can be reduced by updating a random subset, of a fixed size, of $x_{k}$'s at each time step $t$. The cost for updating $\theta$ may be reduced depending on the model specifics, for example by using a Gibbs move for $\theta$ if the posterior distribution $p(\theta | x_{1:t})$ is tractable.

\begin{algorithm}[t]
\caption{SMC update at time $t$}
\label{alg: SMC}
\KwIn{Particles at time $t-1$, $(\theta_{t-1}^{(1:N)} , x_{1:t-1}^{(1:N)} )$, particle weights $w_{t}^{(i)}$ observation $y_{t}$, function $s_{t}$, DP parameter $\epsilon$}
\KwOut{The particle system at time $t$}
\textbf{Resampling:} Resample particles according to their weights:
\[
(\theta^{(1:N)} , x_{1:t-1}^{(1:N)} )\leftarrow \textup{Resample}((\theta^{(1:N)} , x_{1:t-1}^{(1:N)} ); w_{t-1}^{(1:N)}).
\]
\For{$i = 1, \ldots, N$}{
\textbf{Rejuvenation:} Update $(\theta^{(i)}, x_{1:t-1}^{(i)})$ using an MCMC move that targets $p_{s_{1:t-1}}^{\epsilon}(\theta, x_{1:t-1} | y_{1:t-1})$. \\
\textbf{Propagation:} Sample $x_{t}^{(i)} \sim \mathcal{P}_{\theta^{(i)}}$ and append particle $i$ as $(\theta^{(i)}, x_{1:t}^{(i)}) = (\theta^{(i)}, (x_{1:t-1}^{(i)}, x_{t}^{(i)}))$.
}
\textbf{Weighting:} Calculate $w_{t}^{(i)} \propto \textup{Laplace}\left( y_{t} - s_{t}(x^{(i)}), \Delta s_{t}/\epsilon \right)$ for $i =1, \ldots, N$ s.t.\ $\sum_{i = 1}^{N} w_{t}^{(i)} = 1$.
\end{algorithm}

\section{Adaptive truncation for the transformation} \label{sec: Adaptive truncation for the transformation}
In this section, we focus on $H$ in \eqref{eq: update s general} in Algorithm \ref{alg: Differentially private online learning} and describe a method to determine the function $s_{t}$ adaptively so that the estimation performance of SMC is better over a version where an arbitrary $s_{t}$ is used. We confine to $s_{t}$ that corresponds to truncating $x_{t}$ into an interval $[l_{t}, r_{t}]$,
\[
s_{t}(x) = T_{l_{t}}^{r_{t}}(x) := \min\{ \max \{ x, l_{t} \},  r_{t} \},
\]
so that the sensitivity is $\Delta s_{t} = r_{t} - l_{t}$. We assume $X_{t}$ is univariate; for multivariate $X_{t}$ the truncation approach can be applied to each component. 

How should we choose the truncation points $l_{t}, r_{t}$? Recall the trade-off mentioned earlier: A larger $r_{t} - l_{t}$ renders truncation less likely but leads to a larger noise in $Y_{t}$; whereas a smaller $r_{t} - l_{t}$ renders truncation more likely but leads to a smaller noise in $Y_{t}$. Another critical factor is the location of $l_{t}, r_{t}$ relative to $\theta$. For example, when $\theta$ is a location parameter, an interval $(l_{t}, r_{t})$ around $\theta$ may be preferred. 

\citet{Heikkila_et_al_2017} propose a way to optimise the truncation points for batch estimation; however, their method spends a part of the privacy budget and it is not straightforward to extend their method to online estimation. Following \citet{Alparslan_and_Yildirim_2022}, we use the Fisher information as the amount of information that an observation carries about the population parameter. The Fisher information associated to $Y = T_{l}^{r}(X) + (r-l) V$ when $X \sim \mathcal{P}_{\theta}$ and $V \sim \text{Laplace}(1/\epsilon)$ can be expressed as
\begin{align}
F_{l, r}^{\epsilon}(\theta) &= \mathbb{E} \left[  \nabla_{\theta} \log p^{\epsilon}_{l, r}(Y \vert \theta) \nabla_{\theta} \log p^{\epsilon}_{l, r}(Y\vert \theta)^{T} \right], \label{eq: FIM general - 2}
\end{align}
where $p_{l, r}^{\epsilon}(y \vert \theta)$ is the pdf of the marginal distribution of a single observation $Y_{t} = y$ given $\theta$ and $s_{t} = [l_{t}, r_{t}]$. According to this approach, we set the truncation points $l_{t}, r_{t}$ to those $l, r$ values that jointly maximise $F_{l, r}^{\epsilon}(\theta)$.

$F_{l, r}^{\epsilon}(\theta)$ is smaller for a smaller $\epsilon$, due to more noisy observations. However, it is not obvious how $F_{l, r}^{\epsilon}(\theta)$ behaves with $l, r$. The exact calculation of $F_{l, r}^{\epsilon}(\theta)$ is not possible in general as the truncation of $X$ between $l, r$, if nothing else, introduces an intractability in the calculations. That is why we numerically approximate the Fisher information using Monte Carlo
\begin{equation} \label{eq: Monte Carlo FIM}
F_{l, r}^{\epsilon}(\theta) \approx \frac{1}{M} \sum_{j = 1}^{M} \widetilde{\nabla_{\theta} \log p_{l, r}^{\epsilon}(y^{(j)} | \theta)} \widetilde{\nabla_{\theta} \log p_{l, r}^{\epsilon}(y^{(j)} | \theta)^{T}}, \quad y^{(1)}, \ldots, y^{(M)} \overset{\textup{i.i.d}}{\sim} p_{l, r}^{\epsilon}(y | \theta),
\end{equation}
where each gradient term in the sum is calculated using Algorithm \ref{alg: Monte Carlo calculation of the gradient} in Appendix \ref{sec: Supplementary algorithms}.

When $\theta$ is multidimensional, an overall score function $\text{sc}(\cdot)$ can be used to order the Fisher information matrices. Examples of such a score function are the trace and a weighted sum of the diagonals.

\subsection{Exploration-exploitation for interval selection} \label{sec: Exploration-exploitation for interval selection}

We adjust the interval $[l, r]$ to better estimate $\theta$, yet the adjustment is based on $F(\theta)$, which depends on $\theta$. Therefore, we are adapting the intervals based on something that requires the knowledge of $\theta$ which we want to estimate in the first place. This situation necessitates an \emph{exploration-exploitation} approach. When we have little knowledge about $\theta$, we should let our adaptive algorithm have more freedom to locate the truncation interval; but as we learn $\theta$ by receiving more and more observations, the location of the interval should be chosen with less variety. Our exploration-exploitation approach consists of two steps. Given $\Theta_{t}$, 

\begin{enumerate} [labelwidth={1em},font=\bfseries, align=left, noitemsep, topsep=0pt] 
\item[Step 1] Draw $\vartheta_{t} \sim p^{\epsilon, N}_{s_{1:t}}(\mathrm{d} \theta | y_{1:t})$, the SMC approximation of the posterior distribution at time $t$, i.e.,
\[
\text{choose } \vartheta_{t} = \theta^{(i)} \text{ with probability } w_{t}^{(i)}, \quad i = 1, \ldots, N.
\]
\item[Step 2] Determine the interval for the next observation as 
\begin{equation} \label{eq: exploitation}
l_{t+1}, r_{t+1} = \arg \max_{l, r} \text{sc} (F_{l, r}^{\epsilon}(\vartheta_{t})).
\end{equation}
\end{enumerate}
After determining $[l_{t+1}, r_{t+1}]$, the next data point $X_{t+1}$ is shared as
\begin{equation} \label{eq: truncation and noise-adding}
Y_{t+1} = T_{l_{t+1}}^{r_{t+1}}(X_{t+1}) + (r_{t+1} - l_{t+1}) V_{t+1}, \quad V_{t+1} \sim \textup{Laplace}(1/\epsilon).
\end{equation}
The exploration size is decreased as $t$ increases, that is, as more data are observed. Remarkably, this is automatically handled by Step 1 above, since posterior distribution is spread over a wide region for small $t$ but gets more concentrated as more data are received.

\paragraph{Thompson sampling.} Steps 1 and 2 above can be seen as an instance of \emph{Thompson sampling} in reinforcement learning (see e.g.\ \citet{Russo_et_al_2018}): Using the terminology from reinforcement learning, in our case, the `action' is the choice of the interval $[l_{t}, r_{t}]$, `state' is $Y_{t}$,  the `model parameter' is $\theta$, the `past observations' at time $t$ are the states $Y_{1}, \ldots, Y_{t}$, and the `objective function' is $F_{l, r}(\theta)$. If the maximiser $\arg \max_{l, r} F_{l, r}(\theta)$ is unique for every $\theta$, then Thompson sampling corresponds to first sampling $\vartheta_{t} \sim p^{\epsilon}_{l_{1:t}, r_{1:t}}(\mathrm{d} \theta | Y_{1:t})$ and then setting $l_{t+1}, r_{t+1} = \arg \max F_{l, r}(\vartheta_{t})$, which corresponds to the exploration-exploitation approach described above. The exact implementation of Thompson sampling requires sampling from $p_{l_{1:t}, r_{1:t}}(\mathrm{d} \theta | Y_{1:t})$. As often done in practice, we approximate that step sample from the particle approximation $p_{l_{1:t}, r_{1:t}}^{N}(\mathrm{d} \theta | Y_{1:t})$. %

\subsection{Location and scale parameters and truncation} \label{sec: Location and scale parameters and truncation}
In principle, the maximisation step in \eqref{eq: exploitation} can be applied to any population distribution $\mathcal{P}_{\theta}$ for sensitive data. However, location-scale distribution families deserve particular interest due to their common use and certain desirable properties. It is intuitive to suppose that the best truncation points for a location-scale distribution can be obtained simply by scaling and shifting the best truncation points calculated for some \emph{base} distribution. We show here that this is indeed the case. For a general population distribution $\mathcal{P}_{\theta}$, the maximisation \eqref{eq: exploitation} needs to be performed afresh for each $\vartheta_{t}$. For location-scale families, however, the computationally intensive part of \eqref{eq: exploitation} can be done once for some base distribution and its result can easily be applied for all $\vartheta_{t}$ by scaling and shifting. Below we explain how that is possible.

\begin{definition} \label{defn: location-scale family}
A distribution family $\{ f(\cdot; m, c): (m, c) \in \mathbb{R} \times (0, \infty) \}$ is a location-scale family with a base distribution $g(x)$ if for all $(m, c) \in \mathbb{R} \times (0, \infty)$ we have $f(x; m, c) = \frac{1}{c} g((x - m)/c)$ for all $x \in \mathcal{X}$. In particular, $f(x; 0, 1) = g(x)$.
\end{definition}
Assume that $\mathcal{P}_{\theta}$ is a member of a location-scale family, e.g.\ a normal distribution with $\theta$ being the vector of the mean and the standard deviation. When $\vartheta_{t} = (m, c)$ is sampled in Step 1 above, consider formalising the truncation points as
\begin{equation} \label{eq: truncation points for the location-scale families}
l_{t+1} = a c  + m, \quad  r_{t+1} = b c + m,
\end{equation}
where $a$ and $b$ are the free parameters. Then, the problem in \eqref{eq: exploitation} reduces to choosing the best $a, b$ that maximises $\text{sc}(F_{ac +m, bc +m}^{\epsilon}(m, c))$, where $F_{ac +m, bc +m}^{\epsilon}(m, c)$ is the Fisher information associated to the random variable
\begin{equation} \label{eq: noisy observation}
Y = T_{a c + m}^{b c + m}(X) + c (b - a) V, \quad V \sim \textup{Laplace}(1/\epsilon), \quad X \sim \mathcal{P}_{(m, c)}.
\end{equation}
We show that for location-scale families, a \emph{uniformly best pair} $a, b$ over all possible values $(m, c)$ exists. 
\begin{theorem}  \label{thm: order F}
For any $a, b \in \mathbb{R}$, $\epsilon > 0$ and $(m, c) \in \mathbb{R} \times [0, \infty)$, let $\textup{sc}: \mathbb{R}^{2 \times 2} \mapsto \mathbb{R}$ be a score for information matrices. Then, for all pairs $a, b$ and $a', b'$, either one of the three holds 
\begin{align*}
\textup{sc} (F^{\epsilon}_{a c + m , b c + m}(m, c)) & > \textup{sc} (F^{\epsilon}_{a' c + m, b' c + m}(m, c)  ), \quad \forall (m, c) \in \mathbb{R} \times (0, \infty); \\
\textup{sc} (F^{\epsilon}_{a c + m , b c + m}(m, c)  ) & < \textup{sc} (F^{\epsilon}_{a' c + m, b' c + m}(m, c)  ), \quad \forall (m, c) \in \mathbb{R} \times (0, \infty); \\
\textup{sc} (F^{\epsilon}_{a c + m , b c + m}(m, c)  ) & = \textup{sc} (F^{\epsilon}_{a' c + m, b' c + m}(m, c)  ), \quad \forall (m, c) \in \mathbb{R} \times (0, \infty). 
\end{align*}
\end{theorem}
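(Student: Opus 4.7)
The plan is to exploit the location-scale structure to show that the Fisher information matrix factorises as
\[
F^{\epsilon}_{ac + m, bc + m}(m, c) \;=\; \frac{1}{c^{2}}\, F^{\epsilon}_{a, b}(0, 1),
\]
so the dependence on $(m, c)$ and the dependence on the shape parameters $(a, b)$ decouple into a positive scalar factor and a base matrix, respectively. Once this factorisation is in hand, the trichotomy is immediate, because any reasonable score (trace, determinant, weighted diagonal sum, etc.) is order-preserving under multiplication by a positive scalar.

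First I would reduce the problem to a canonical noisy observation. If $X \sim \mathcal{P}_{(m, c)}$, then $Z := (X - m)/c \sim \mathcal{P}_{(0, 1)}$ by Definition \ref{defn: location-scale family}. A short case-by-case check (using $c > 0$) shows that the truncation operator obeys $T_{ac + m}^{bc + m}(cz + m) = c\, T_{a}^{b}(z) + m$. Substituting into \eqref{eq: noisy observation} gives
\[
Y \;=\; c\bigl[T_{a}^{b}(Z) + (b - a) V\bigr] + m \;=\; c\, Y_{0} + m,
\]
where $Y_{0} := T_{a}^{b}(Z) + (b - a) V$ has a density $q$ that depends on $(a, b, \epsilon)$ alone, not on $(m, c)$. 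By a one-line change of variables, the density of $Y$ is $p^{\epsilon}_{ac + m, bc + m}(y \mid m, c) = c^{-1} q\bigl((y - m)/c\bigr)$.

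Next I would compute the score vector. Writing $\ell = \log q$ and $U = (Y - m)/c$, differentiation of $\log p$ in $m$ and $c$ yields
\[
\partial_{m} \log p \;=\; -\tfrac{1}{c}\, \ell'(U), \qquad \partial_{c} \log p \;=\; -\tfrac{1}{c}\bigl[1 + U\, \ell'(U)\bigr].
\]
Under $(m, c)$ we have $U \sim q$, which is exactly the law of $Y_{0}$ at the base parameter $(0, 1)$. Squaring and taking expectations therefore gives the three entries of $F^{\epsilon}_{ac + m, bc + m}(m, c)$ as $c^{-2}$ times the corresponding expectations of $\ell'(U)^{2}$, $\ell'(U)[1 + U \ell'(U)]$, and $[1 + U \ell'(U)]^{2}$ under $U \sim q$, which are by definition the entries of $F^{\epsilon}_{a, b}(0, 1)$. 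This proves the factorisation displayed above.

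To finish, I would apply $\text{sc}$ to both sides. Since $c^{-2} > 0$ and the score functions under consideration (trace, weighted diagonal sum, determinant, and the like) preserve strict inequalities under multiplication by a positive scalar, the sign of
\[
\text{sc}\bigl(F^{\epsilon}_{ac + m, bc + m}(m, c)\bigr) - \text{sc}\bigl(F^{\epsilon}_{a'c + m, b'c + m}(m, c)\bigr)
\]
equals the sign of $\text{sc}(F^{\epsilon}_{a, b}(0, 1)) - \text{sc}(F^{\epsilon}_{a', b'}(0, 1))$, which does not depend on $(m, c)$. This gives exactly the claimed trichotomy. The only subtle point, and the place that needs the most care, is justifying differentiation of $\log p$ in $(m, c)$: the underlying truncated density of $X$ puts atoms at $ac + m$ and $bc + m$, so $q$ is not a priori smooth. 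However, convolution with the Laplace noise $(b - a)V$ (with $a < b$; the degenerate case $a = b$ gives a constant observation and $F = 0$, for which the theorem is trivial) smooths $q$ into a $C^{\infty}$ density on $\mathbb{R}$, and all $(m, c)$-dependence in $p$ has been isolated into the affine argument $(y - m)/c$, so the differentiations above are unproblematic. This is the only nontrivial analytic step; everything else is algebraic.
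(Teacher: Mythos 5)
Your proof follows essentially the same route as the paper: you show that $Y = cY_{0} + m$ with the law of $Y_{0}$ depending only on $(a, b, \epsilon)$ (the paper's Lemmas \ref{lem: location and scale} and \ref{lem: location and scale of noisy observation}), deduce the factorisation $F^{\epsilon}_{ac+m, bc+m}(m, c) = c^{-2} F^{\epsilon}_{a, b}(0, 1)$, and conclude via order-preservation of $\textup{sc}$ under positive scaling. The only differences are cosmetic: you derive the $c^{-2}$ scaling of the Fisher information by explicit computation of the score vector (the paper cites known location-scale formulae) and you explicitly address smoothness of the convolved density, both of which are sound.
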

A proof of Theorem \ref{thm: order F} is given in Appendix \ref{sec: Proof of Theorem}. Theorem \ref{thm: order F} implies that it suffices to find
\begin{equation} \label{eq: best a b for base}
(a^{\ast}, b^{\ast}) = \arg \max_{a, b} \textup{sc}(F_{a, b}^{\epsilon}(0, 1)),
\end{equation}
the best $a, b$ for the base distribution, i.e., for $(m, c) = (0, 1)$. Then, it is guaranteed that those $a^{\ast}, b^{\ast}$ are the best choices for all $(m, c)$ values when the intervals are chosen according to \eqref{eq: truncation points for the location-scale families}. Therefore, maximisation for interval selection needs to be done only once, implying significant computational savings.

\section{Numerical results} \label{sec: Numerical results}
In our experiments\footnote{The code for the experiments can be found at \url{https://github.com/sinanyildirim/SMC_DP_adaTr}}, we take $\mathcal{N}(\mu, \sigma^{2})$ as the population distribution, so that $\theta = (\mu, \sigma)$, and aim to estimate both $\mu$ and $\sigma$. Sensitive data $X_{1}, \ldots, X_{n}$ of length $n = 1000$ are generated from $\mu = 50$ and $\sigma^{2} = 10$. The parameters are taken \emph{a priori} independent with $\mu \sim \mathcal{N}(0, 10^{4})$ and $\sigma^{2} \sim \mathcal{IG}(1, 1)$, where $\mathcal{IG}(\alpha, \beta)$ is the inverse gamma distribution with shape $\alpha$ and scale $\beta$. Algorithm \ref{alg: Differentially private online learning} is implemented for online learning of $\theta$ by combining the SMC method in Section \ref{sec: Sequential Monte Carlo for Bayesian estimation} with the exploration-exploitation strategy described in Section \ref{sec: Adaptive truncation for the transformation} for the choice of the truncation points. 

Note that $\mathcal{N}(\mu, \sigma^{2})$ is a location-scale distribution with $\mu$ and $\sigma$ being the location and scale parameters. Therefore, we apply \eqref{eq: noisy observation} to generate the noisy observations, where $a$, $b$ are the optimal truncation points corresponding to the $\mathcal{N}(0, 1)$.  The Fisher information matrix is a $2 \times 2$ matrix, corresponding to the bivariate parameter $\mu, \sigma$. For the example's sake, we consider that the primary goal is to estimate $\mu$ while $\sigma$ is of secondary importance. Thus, we chose the score function as the first entry of the Fisher information matrix, that is, $\text{sc}(F_{a, b}^{\epsilon}(0, 1)) = F_{a, b}^{\epsilon}(0, 1)[1, 1]$. The maximisation in \eqref{eq: best a b for base} is performed by Monte Carlo estimation of $F_{a, b}^{\epsilon}(0, 1)$ on the $50 \times 50$ grid spanning $[-3, 3] \times [3, 3]$ of $(a, b)$ points. The Monte Carlo estimation is performed as in \eqref{eq: Monte Carlo FIM} with $M = 1000$, where the gradient terms in \eqref{eq: Monte Carlo FIM} are approximated using Algorithm \ref{alg: Monte Carlo calculation of the gradient} with samples of size $10000$. The best $[a, b]$ intervals were numerically found as $[-0.06, 0.06]$, $[-0.12, 0.12]$, $[-0.54, 0.54]$, and $[-0.96, 0.96]$ for $\epsilon = 1, 2, 5, 10$ respectively.

Figure \ref{fig: one experiment} summarises the entire course of one run of SMC with adaptive truncation, which we call ``SMC-adaptive''. For each of $\epsilon = 1, 2, 5, 10$, we repeat this experiment $30$ times independently. 
\begin{figure}[b]
\centerline{
\fbox{\parbox{\textwidth}{
\begin{itemize}
\item First, find $a, b$ that maximises  $\text{sc}(F_{a, b}^{\epsilon}(0, 1))$ the score of the Fisher information matrix of $Y = T_{a}^{b}(X) + (b-a) V$, when $X \sim \mathcal{N}(0, 1)$, $V \sim \text{Laplace}(1/\epsilon)$. 
\item Start with, $l_{1}, r_{1}$.  For $t = 1, \ldots, n$,
\begin{itemize}[noitemsep, topsep=0pt]
\item generate $Y_{t} = T_{l_{t}}^{r_{t}}(X_{t}) + (r_{t} - l_{r}) V_{t}$, where $X_{t} \sim \mathcal{N}(\mu, \sigma^{2})$ and $ V_{t} \sim \text{Laplace}(1/\epsilon)$.
\item Update the particle system of SMC using Algorithm \ref{alg: SMC} with $N = 1000$ particles to construct the SMC approximation of the posterior $p^{\epsilon}_{l_{1:t}, r_{1:t}}(\theta | Y_{1:t})$.
\item sample $\vartheta_{t} = (m, c)$ from the SMC approximation $p^{\epsilon, N}_{l_{1:t}, r_{1:t}}(\theta | y_{1:t})$.
\item determine the new truncation points $l_{t+1} = m + c a$, $r_{t+1} = m + c b$.
\end{itemize}
\end{itemize}
}}
}
\caption{The entire course of one run of the SMC method with adaptive truncation}
\label{fig: one experiment}
\end{figure}

We compared SMC-adaptive to two non-adaptive algorithms. The first one is the same SMC method in Algorithm \ref{alg: SMC}, but with constant truncation points,  $l_{c} = \mu - 10 \sigma$ and $r_{c} =  \mu + 10\sigma$ for all $t$. We call this algorithm ``SMC-non-adaptive''. The second method is an MCMC sampling method that targets the conditional distribution of $\theta$ given the entire batch of the observations at once, $p^{\epsilon}_{l_{c}, r_{c}}(\theta | Y_{1:n})$, where the observations are generated using the same truncation points for all $X_{t}$ as in SMC-non-adaptive as
\[
Y_{t} = T_{l_{c}}^{r_{c}}(X_{t}) + (r_{c} - l_{c}) V_{t}, \quad V_{t} \overset{\textup{i.i.d.}}{\sim} \text{Laplace}(1/\epsilon), \quad t = 1, \ldots, n.
\]
The model for the random variables $\{\theta, X_{1:n}, Y_{1:n}\}$ is a latent variable model for independent observations. That is why, as the MCMC method, we chose the MHAAR algorithm proposed in \cite[Section 3]{Andrieu_et_al_2020}, which is well suited to such latent variable models and also proposed for privacy applications in \citet{Alparslan_and_Yildirim_2022}. The interval $[l_{c}, r_{c}]$, chosen for the non-adaptive methods, represents the situation in many practical applications where there is not much strong \emph{a priori} knowledge available about $\theta$. The comparison with the non-adaptive version of the SMC aims to show the merit of adaptive truncation. Moreover, by comparing with the MCMC method, we aim to show the advantage of adaptation even when online estimation is not required.

Figure \ref{fig: SMC particle distribution} displays the performance of the two SMC methods for a single run and each $\epsilon$. The scatter plots of the particles (after resampling so that they have equal weights) at every 20th time step, as well as the mean estimates, are shown. Further, the truncation points are also shown in the plots for the location parameter $\mu$. Observe the decreasing amount of spread of the particles as $t$. Also, as expected, accuracy increases with $\epsilon$. We also observe the clear benefit of the adaptive truncation method relative to its non-adaptive counterpart when we compare the particle distributions: the particles of the SMC algorithm with adaptive truncation get more concentrated around the true values and do that much more quickly than those of the non-adaptive version. The posterior means, shown with red lines, also demonstrate the advantage of the truncation method.

While Figure \ref{fig: SMC particle distribution} shows results by the SMC methods from a single run, Figure \ref{fig: mean posterior estimates box plots} shows the box plots of the mean posterior estimates of $\theta$, obtained from $30$ independent runs, of all the three methods under comparison, namely SMC-adaptive, SMC-non-adaptive and the MCMC methods. The box plots clearly show that the adaptive truncation approach is beneficial in terms of estimation accuracy as our method beats the other two methods for both parameters and all the tried $\epsilon$ values.

\begin{figure}[h!]
\centerline{
\begin{minipage}{0.25\textwidth}
\begin{tikzpicture}
  \node (img1)  {\includegraphics[scale=0.53]{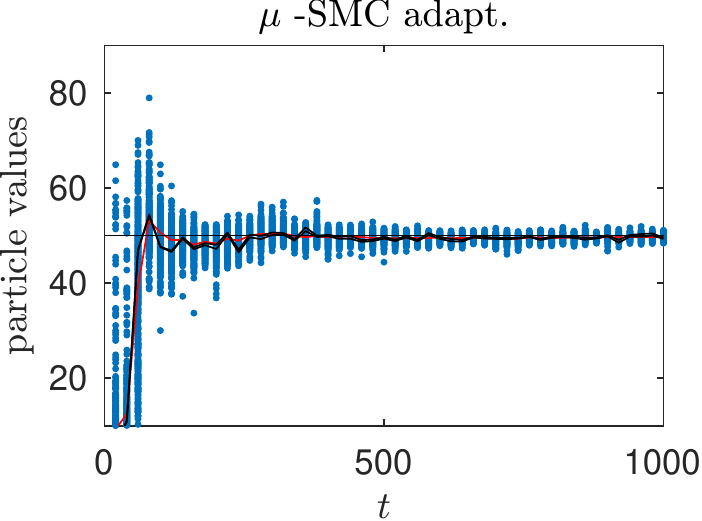}};
  \node[below=of img1, yshift=1cm] (img2)  {\includegraphics[scale=0.5]{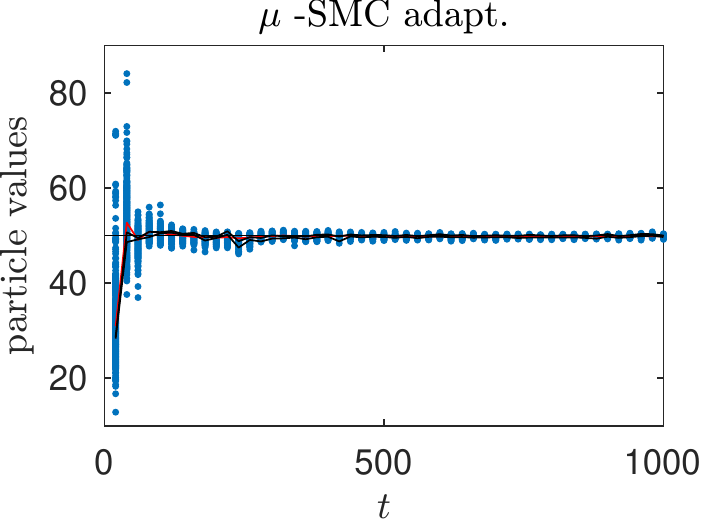}};
  \node[below=of img2, yshift=1cm] (img5)  {\includegraphics[scale=0.5]{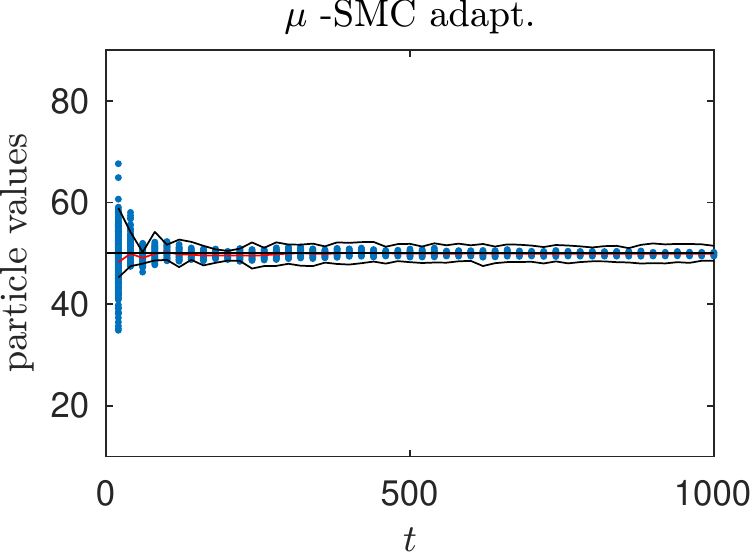}};
  \node[below=of img5, yshift=1cm] (img10)  {\includegraphics[scale=0.5]{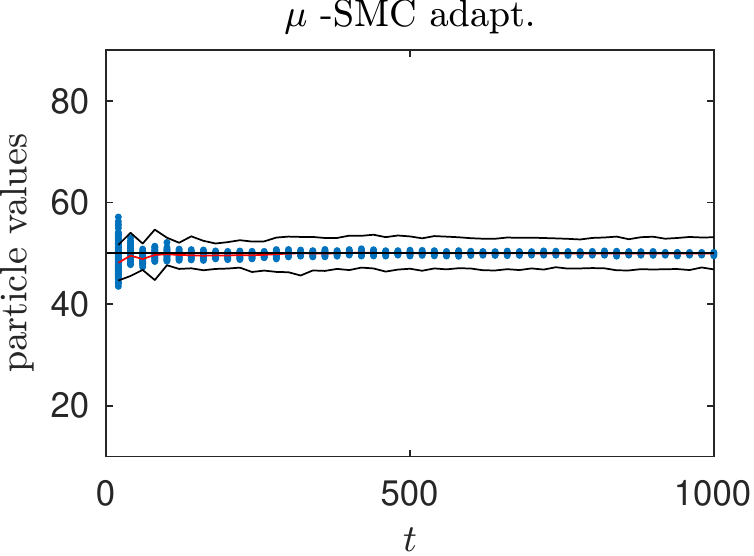}};
   \node[left=of img1, node distance=0cm, rotate=90, anchor=center,yshift=-0.8cm,font=\color{red}] {\small{$\epsilon = 1$}};
  \node[left=of img2, node distance=0cm, rotate=90, anchor=center,yshift=-0.8cm,font=\color{red}] {\small{$\epsilon = 2$}};
  \node[left=of img5, node distance=0cm, rotate=90, anchor=center,yshift=-0.8cm,font=\color{red}] {\small{$\epsilon = 5$}};
  \node[left=of img10, node distance=0cm, rotate=90, anchor=center,yshift=-0.8cm,font=\color{red}] {\small{$\epsilon = 10$}};
 \end{tikzpicture}
\end{minipage}%
\hspace{0.2 cm}
\begin{minipage}{0.25\textwidth}
\begin{tikzpicture}
  \node (img1)  {\includegraphics[scale=0.53]{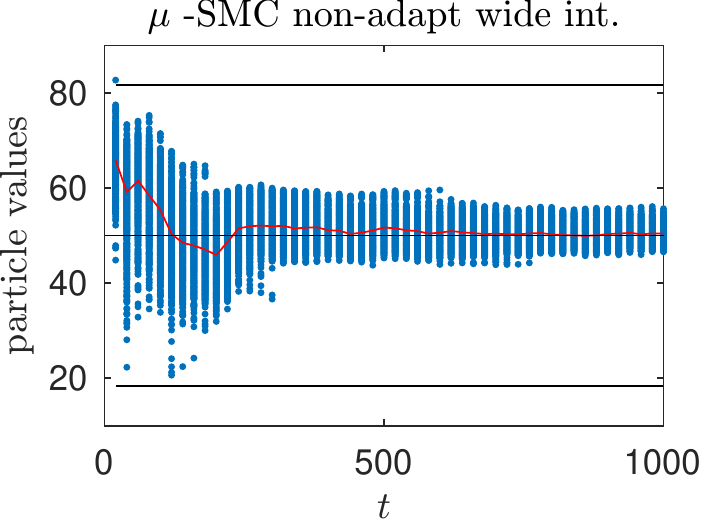}};
  \node[below=of img1, yshift=1cm] (img2)  {\includegraphics[scale=0.5]{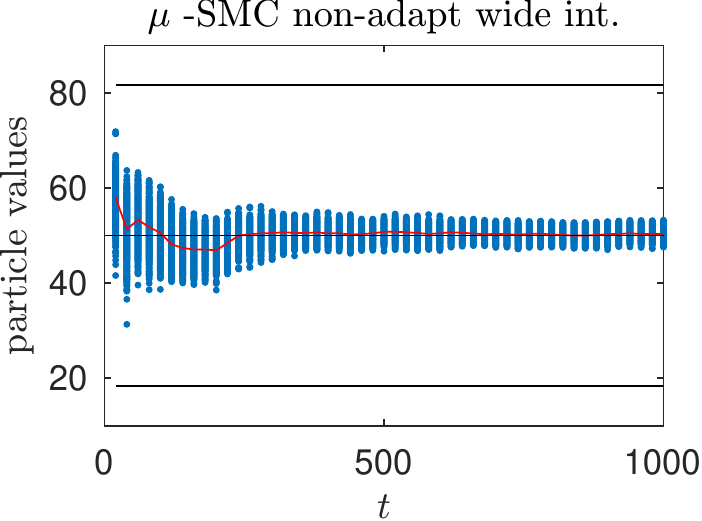}};
  \node[below=of img2, yshift=1cm] (img5)  {\includegraphics[scale=0.5]{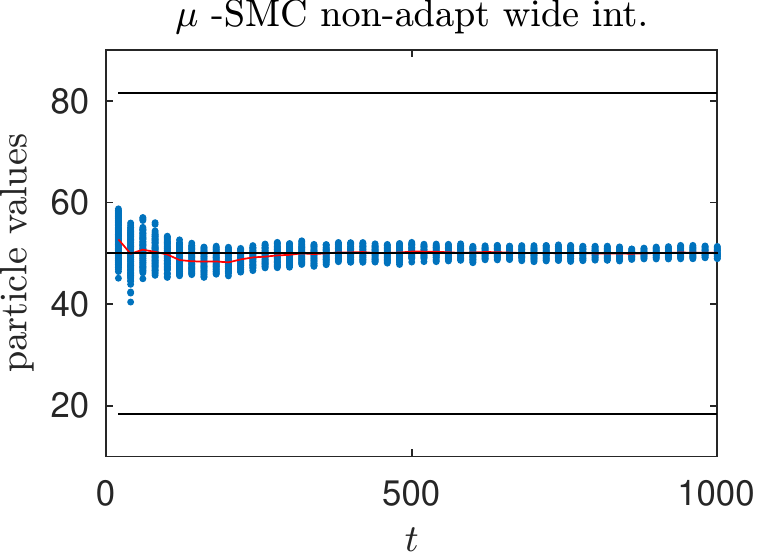}};
  \node[below=of img5, yshift=1cm] (img10)  {\includegraphics[scale=0.5]{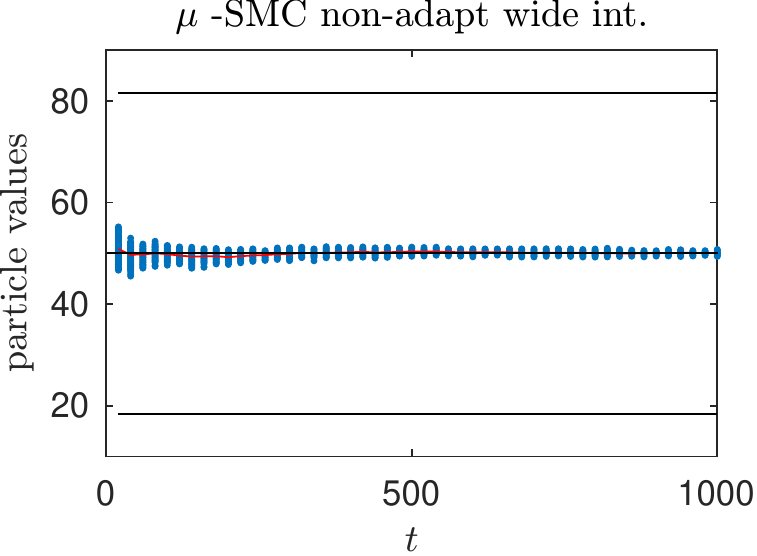}};
 \end{tikzpicture}
\end{minipage}%
\begin{minipage}{0.25\textwidth}
\begin{tikzpicture}
  \node (img1)  {\includegraphics[scale=0.5]{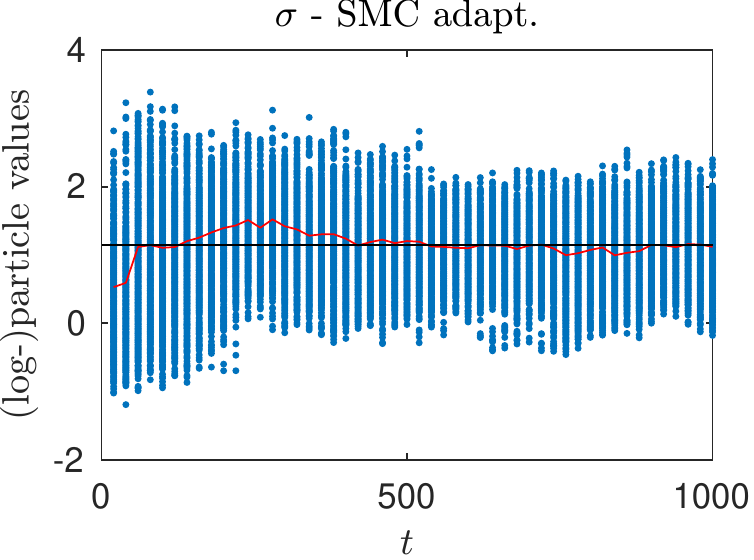}};
  \node[below=of img1, yshift=1cm] (img2)  {\includegraphics[scale=0.5]{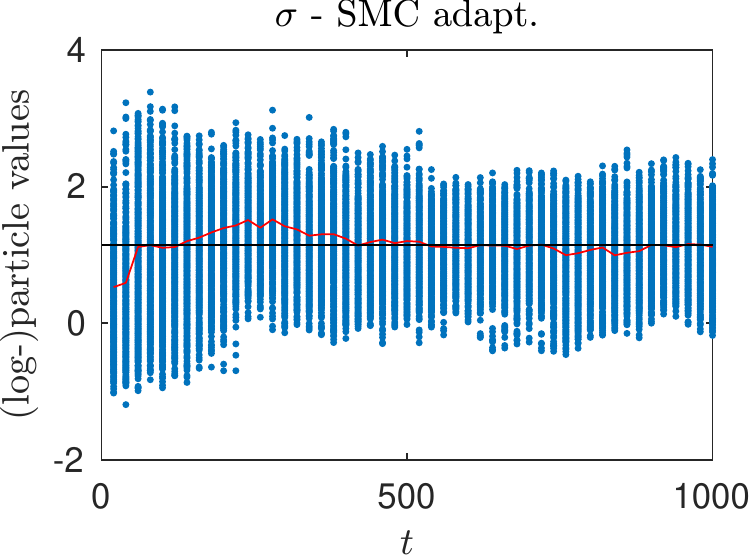}};
  \node[below=of img2, yshift=1cm] (img5)  {\includegraphics[scale=0.5]{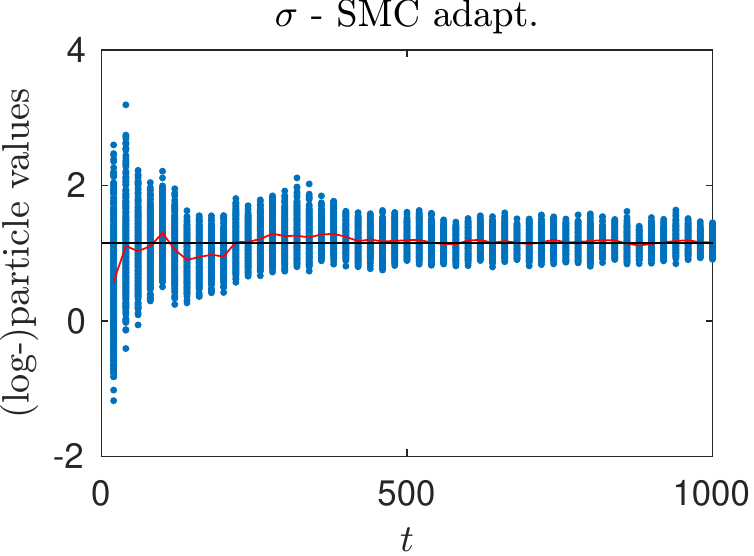}};
  \node[below=of img5, yshift=1cm] (img10)  {\includegraphics[scale=0.5]{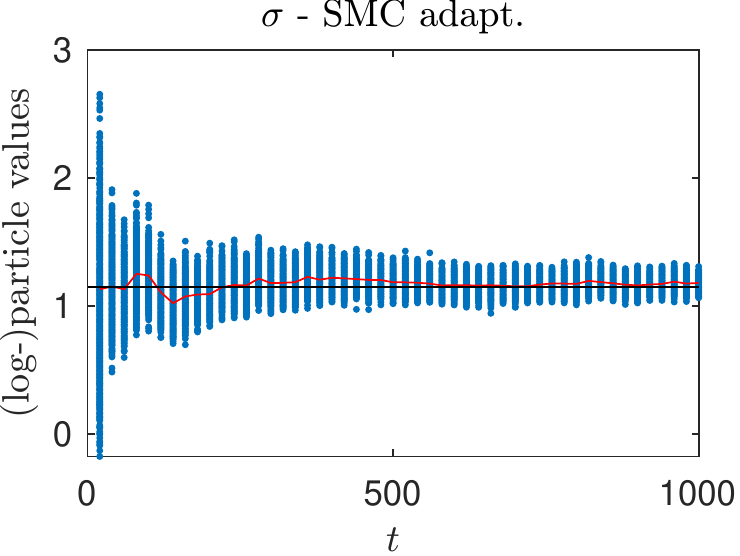}};
 \end{tikzpicture}
\end{minipage}%
\begin{minipage}{0.25\textwidth}
\begin{tikzpicture}
  \node (img1)  {\includegraphics[scale=0.5]{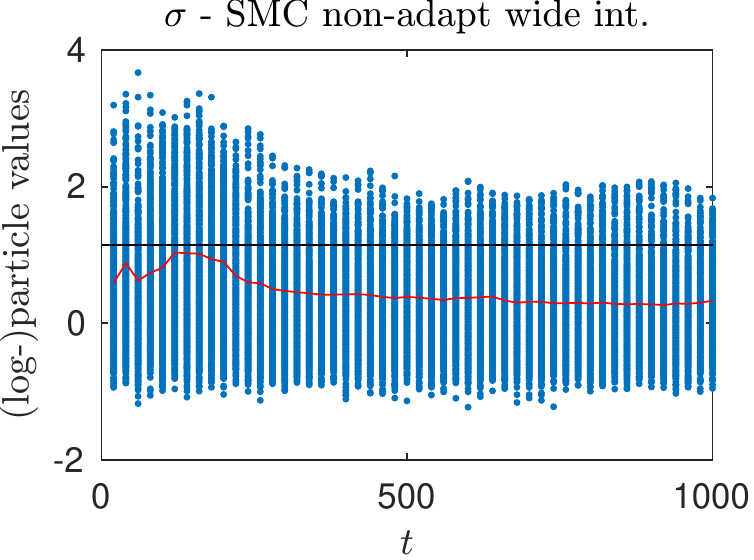}};
  \node[below=of img1, yshift=1cm] (img2)  {\includegraphics[scale=0.5]{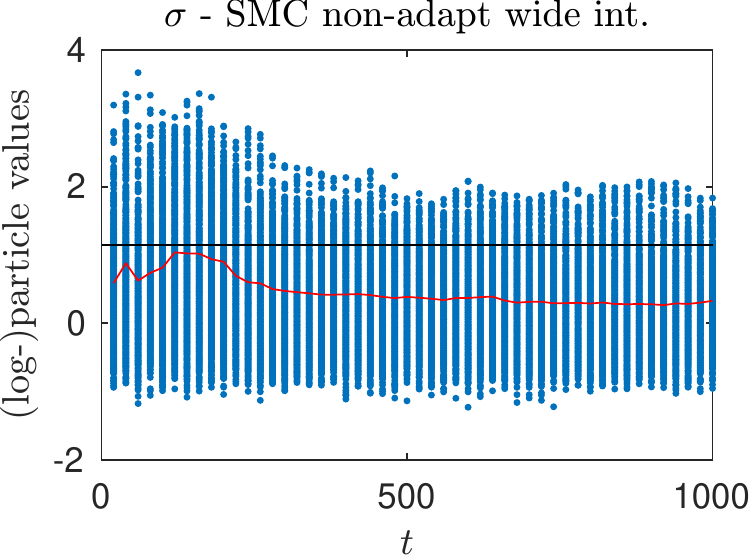}};
  \node[below=of img2, yshift=1cm] (img5)  {\includegraphics[scale=0.5]{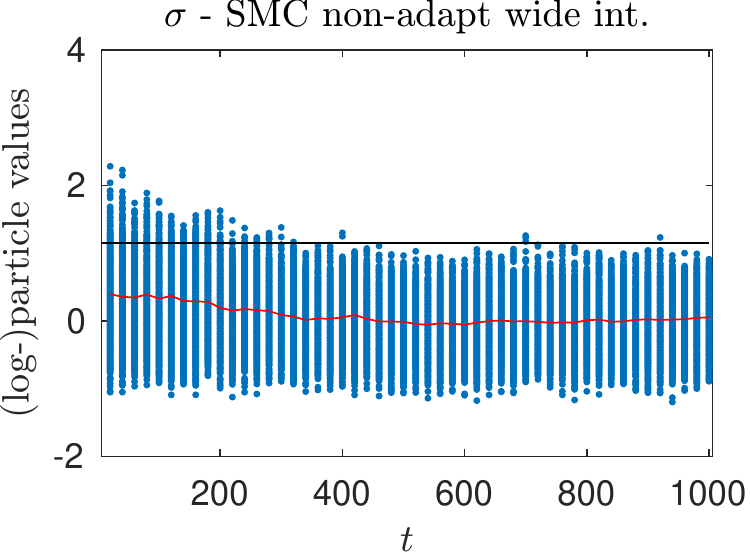}};
  \node[below=of img5, yshift=1cm] (img10)  {\includegraphics[scale=0.5]{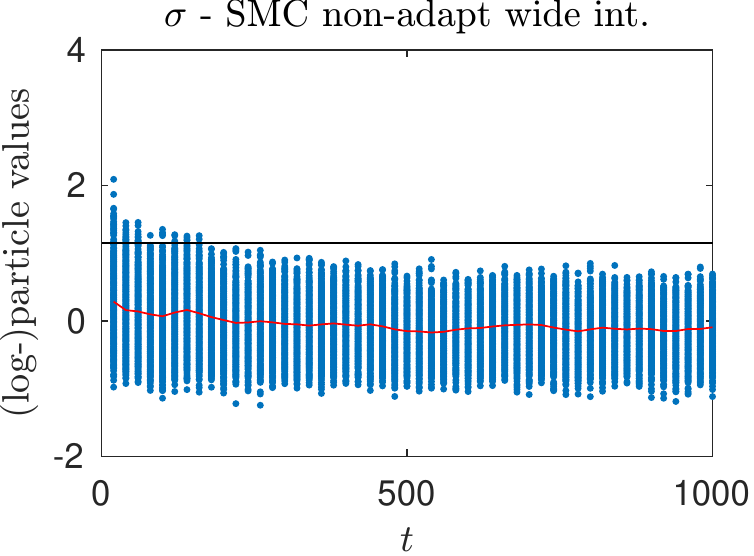}};
 \end{tikzpicture}
\end{minipage}%
}
\caption{The particle distribution of SMC (every 20th time-step shown) (blue points) and the estimate of the posterior means (red line) versus time. Black lines indicate the true values. In plots for $\mu$, truncation points are also shown.}
\label{fig: SMC particle distribution}
\end{figure}

\begin{figure}[h!]
\centerline{
\begin{minipage}{0.25\textwidth}
\begin{tikzpicture}
\node (img1a)  {\includegraphics[scale = 0.47]{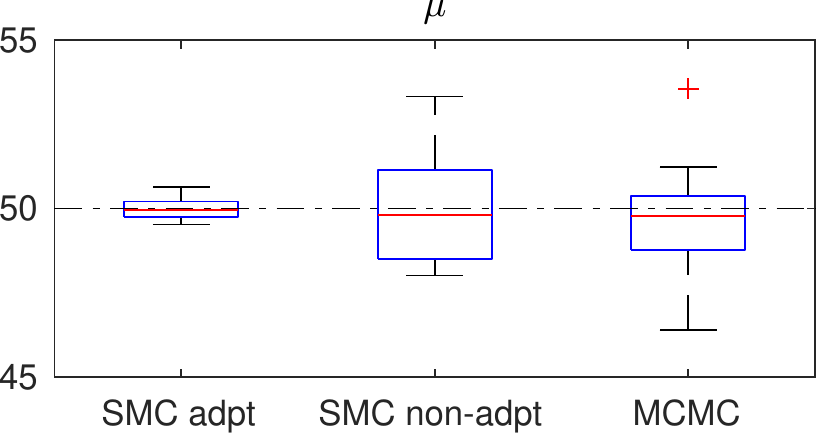}};
\node [below=of img1a, yshift=1cm] (img1b)  {\includegraphics[scale = 0.47]{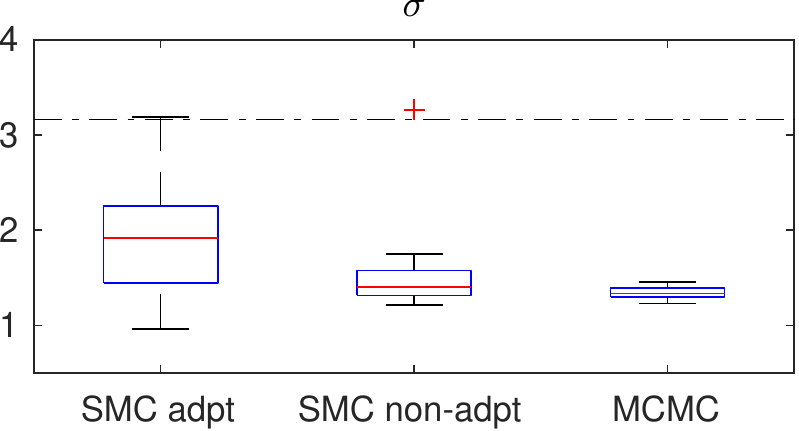}};
\node[above=of img1a, node distance=0cm, anchor=center, yshift=-0.9cm,font=\color{red}] {\footnotesize{$\epsilon = 1$}};
 \end{tikzpicture}
 \end{minipage}
\begin{minipage}{0.25\textwidth}
\begin{tikzpicture}
\node (img1a)  {\includegraphics[scale = 0.47]{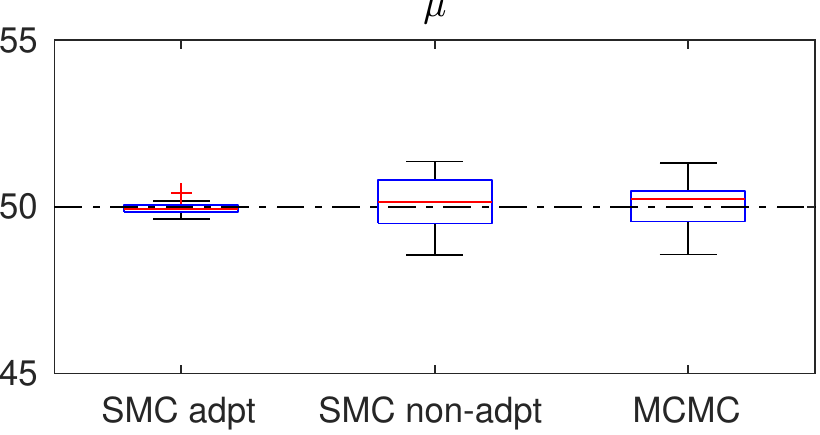}};
\node [below=of img1a, yshift=1cm] (img1b)  {\includegraphics[scale = 0.47]{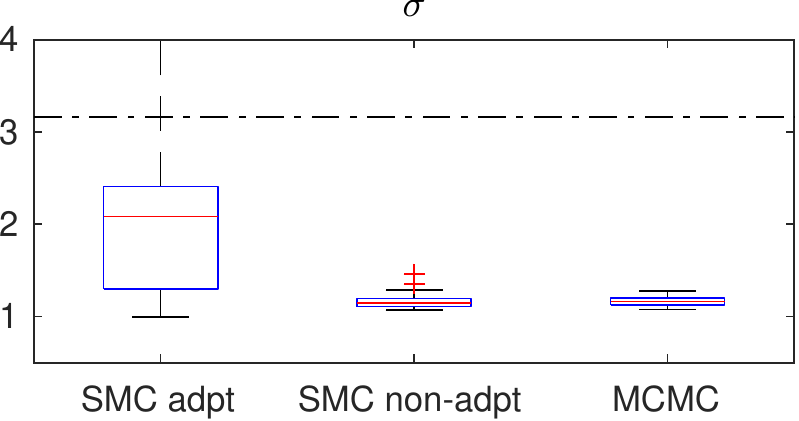}};
\node[above=of img1a, node distance=0cm, anchor=center,yshift=-0.9cm,font=\color{red}] {\footnotesize{$\epsilon = 2$}};
 \end{tikzpicture}
 \end{minipage}
\begin{minipage}{0.25\textwidth}
\begin{tikzpicture}
\node (img1a)  {\includegraphics[scale = 0.47]{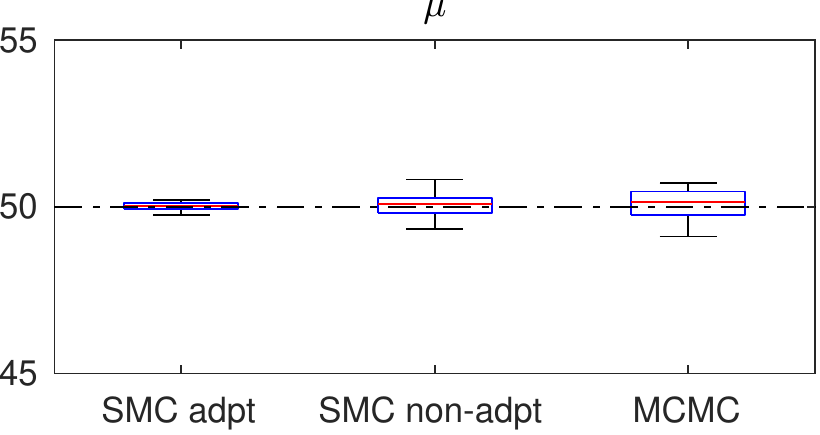}};
\node [below=of img1a, yshift=1cm] (img1b)  {\includegraphics[scale = 0.47]{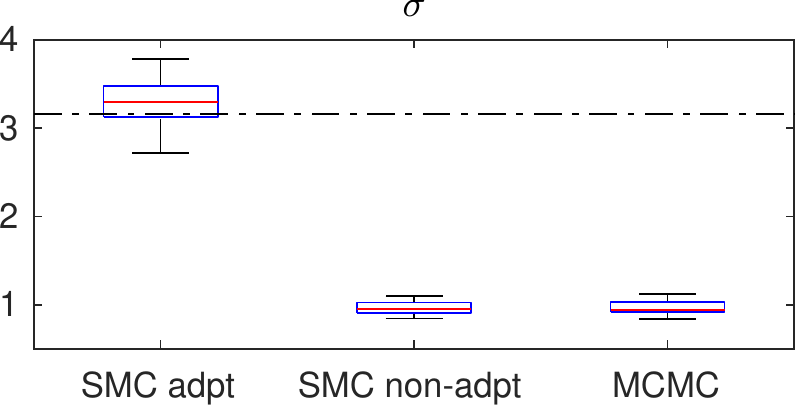}};
\node[above=of img1a, node distance=0cm, anchor=center,yshift=-0.9cm,font=\color{red}] {\footnotesize{$\epsilon = 5$}};
 \end{tikzpicture}
 \end{minipage}
\begin{minipage}{0.25\textwidth}
\begin{tikzpicture}
\node (img1a)  {\includegraphics[scale = 0.47]{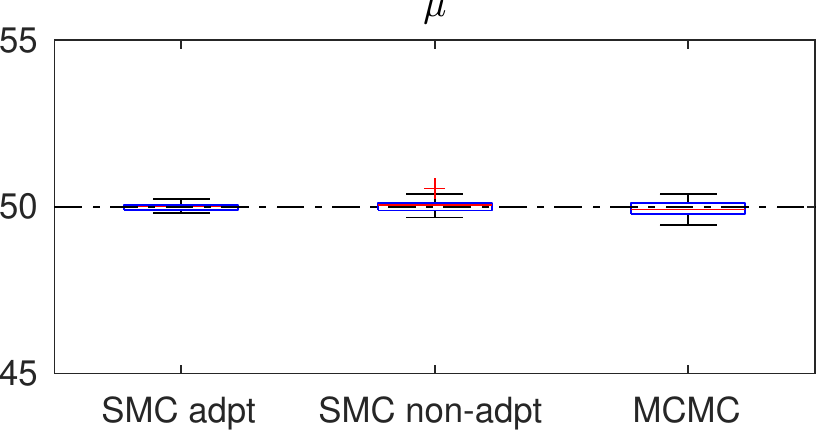}};
\node [below=of img1a, yshift=1cm] (img1b)  {\includegraphics[scale = 0.47]{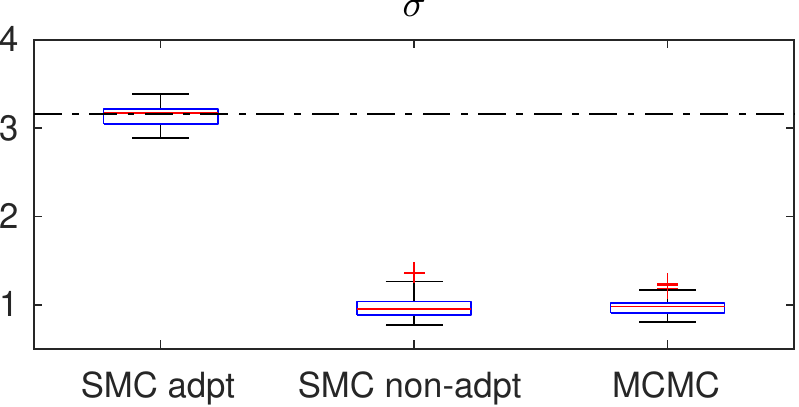}};
\node[above=of img1a, node distance=0cm, anchor=center,yshift=-0.9cm,font=\color{red}] {\footnotesize{$\epsilon = 10$}};
 \end{tikzpicture}
 \end{minipage}
}
\caption{Box-plots of the posterior means, obtained from 30 runs}
\label{fig: mean posterior estimates box plots}
\end{figure}

\section{Conclusion} \label{sec: Conclusion}
This paper presents a novel methodology for differentially private online Bayesian estimation with adaptive truncation. The proposed methodology is a working example of the general idea that, as we gain knowledge about the process that generates sensitive data, we can modify our `query' about sensitive data to get more utility while maintaining the same level of privacy. The proposed method demonstrated its merits in the numerical experiments involving the normal distribution, one of the most commonly used distributions for modelling univariate i.i.d.\ data. It would be interesting to see the extension of the work to other distributions, especially multivariate distributions.

Although we considered the Laplace mechanism throughout, the methodology can be modified straightforwardly for other privacy mechanisms, such as the Gaussian mechanism, that provide different senses of privacy. All that changes throughout is the conditional distribution of $Y_{t}$ given $x_{t}, \theta, s_{t}, \epsilon$.

We considered Bayesian inference in this work. Bayesian inference fits ideally into the exploration-exploitation framework by providing a proper sense of uncertainty about $\theta$ via the posterior distribution. However, its computational cost that grows quadratically with data size can be concerning when $n$ is very large. We mentioned some possible ways to reduce the computational load. A viable alternative is to use an online gradient method for online point estimation of $\theta$, where the gradients can be calculated approximately using Monte Carlo as in Algorithm \ref{alg: Monte Carlo calculation of the gradient}. The online gradient method can be advantageous in terms of computational load but it would be more challenging to tune the exploration-exploitation heuristic since a posterior distribution of $\theta$ would not be available.

One limitation of this work is the requirement of considering a parametric distribution family for the population distribution. Estimation of characteristics regarding non-parametric distributions, such the as mean and quantiles, is also a fundamental problem for data streaming applications \citep{Dwork_and_Lei_2009, Alabi_et_al_2022a}. A useful extension of this work would be an online estimation methodology of (characteristics) non-parametric distributions. Non-parametric methods like conformal prediction \citep{Shafer_and_Vovk_2008} can be considered for the adaptive query selection.

\section*{Acknowledgment}
This study was funded by the Scientific and Technological Research Council of Turkey (TÜBİTAK) ARDEB Grant No 120E534. The author is supported by the grant.

\bibliographystyle{apalike}
{\small \bibliography{my_refs.bib}}

\begin{appendix}
\section{Proof of Proposition \ref{prop: DP of Alg 1}} \label{sec: Proof of Proposition 1}

\begin{proof} (Proposition \ref{prop: DP of Alg 1})
Let $R_{n}$ be the set of all revealed outputs of Algorithm \ref{alg: Differentially private online learning}. For any $n \geq 1$, its conditional distribution given $X_{1:n} = x_{1:n}$ at $R_{n} = ( y_{1:n}, s_{1:n}, \theta_{1:n} )$ is given by
\[
P(\mathrm{d} R_{n} | X_{1:n} = x_{1:n}) = \prod_{t = 1}^{n} \textup{Laplace}(y_{t} - s_{t}(x_{t}), \Delta s_{t}/\epsilon ) \mathrm{d} y_{t} \prod_{t = 1}^{n} P(\mathrm{d} s_{t} | \theta_{t}) p(\mathrm{d} \theta_{t} | \theta_{1:t-1}, y_{1:t}, s_{1:t})
\]
The ratio between the conditional distributions with $x_{1:n}$ and $x'_{1:n}$ for any neighbour pair $x_{1:n}, x'_{1:n}$ differing by some $k$'th element is given by
\[
e^{-\epsilon} < \frac{P( \mathrm{d} R_{n} | X_{1:n} = x_{1:n})}{ P(\mathrm{d} R_{n} | X_{1:n} = x'_{1:n})} = \frac{\prod_{t = 1}^{n} \textup{Laplace}(y_{t} - s_{t}(x_{t}), \Delta s_{t}/\epsilon )}{\prod_{t = 1}^{n} \textup{Laplace}(y_{t} - s_{t}(x'_{t}), \Delta s_{t}/\epsilon )} = \frac{\textup{Laplace}(y_{k} - s_{k}(x_{k}), \Delta s_{k}/\epsilon )}{\textup{Laplace}(y_{k} - s_{k}(x'_{k}), \Delta s_{k}/\epsilon )}  < e^{\epsilon}
\]
where the first equality is because the other factors do not depend on $x_{1:n}$ (or $x'_{1:n}$), the second equality is because $x_{1:n}$ and $x'_{1:n}$ differ by the $k$'th element only.
\end{proof}

\section{Proof of Theorem \ref{thm: order F}} \label{sec: Proof of Theorem}
We prove the theorem by first showing that the distribution of $Y$ in \eqref{eq: noisy observation} belongs to a location-scale family. Lemmas \ref{lem: location and scale} and \ref{lem: location and scale of noisy observation} are used for establishing that.  Given a distribution with density $f$ on $\mathbb{R}$ and an interval $[l, r]$, we define $f_{[l, r]}$ to be $f$ truncated to $[l, r]$, that is $f_{[l, r]}(x) \propto f(x) \mathbb{I}\{ x \in [l, r]\}$. Lemma \ref{lem: location and scale} states that the truncated version of a location-scale distribution is also a location-scale distribution.

\begin{lemma} \label{lem: location and scale}
Let $\{ f(x; m, c); m \in \mathbb{R} \}$ be a location-scale family of distributions with location parameter $m$, scale parameter $c$ and base distribution $g$. For any $a, b$; the family of truncated distributions $\{ f_{[c a+m, c b+m]}(x; m, c): (m, c) \in \mathbb{R} \times [0, \infty) \}$ is a location-scale family with location parameter $m$, scale parameter $c$, and base distribution $g_{[a, b]}$.
\end{lemma}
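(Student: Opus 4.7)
The plan is to unpack both sides using the definition of truncation and then do the single change of variables $v = (x-m)/c$ that already underlies Definition~\ref{defn: location-scale family}. I expect no genuine obstacle here: the statement is essentially that truncation and the affine reparametrisation $x \mapsto (x-m)/c$ commute, and this will fall out by a direct computation. The only thing to keep track of is that the truncation interval is specified as $[ca+m, cb+m]$, which is precisely the $(m,c)$-image of $[a,b]$ under that same affine map, so the indicator and the normalising constant both transform cleanly.

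Concretely, starting from the definition of $f_{[ca+m,\,cb+m]}$, I would write
\[
f_{[ca+m,\,cb+m]}(x;m,c) \;=\; \frac{f(x;m,c)\,\mathbb{I}\{x \in [ca+m, cb+m]\}}{\displaystyle \int_{ca+m}^{cb+m} f(u;m,c)\,\mathrm{d}u}.
\]
Using $f(x;m,c) = \tfrac{1}{c}\,g((x-m)/c)$ from Definition~\ref{defn: location-scale family}, the numerator becomes $\tfrac{1}{c}\,g((x-m)/c)\,\mathbb{I}\{(x-m)/c \in [a,b]\}$, because $x \in [ca+m, cb+m]$ is equivalent to $(x-m)/c \in [a,b]$ when $c>0$. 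For the denominator, the substitution $v = (u-m)/c$, $\mathrm{d}u = c\,\mathrm{d}v$, converts it into $\int_a^b g(v)\,\mathrm{d}v$, the normalising constant of $g_{[a,b]}$.

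Putting the two pieces together,
\[
f_{[ca+m,\,cb+m]}(x;m,c) \;=\; \frac{1}{c}\cdot\frac{g((x-m)/c)\,\mathbb{I}\{(x-m)/c \in [a,b]\}}{\int_a^b g(v)\,\mathrm{d}v} \;=\; \frac{1}{c}\,g_{[a,b]}\!\left(\frac{x-m}{c}\right).
\]
This is exactly the location-scale form of Definition~\ref{defn: location-scale family} with base density $g_{[a,b]}$, so the family $\{f_{[ca+m,\,cb+m]}(\cdot;m,c)\}$ is a location-scale family with location $m$, scale $c$, and base $g_{[a,b]}$. I would briefly note the degenerate boundary case $c=0$ (or the boundary in the range stated in the lemma) either by excluding it as a measure-zero case or by interpreting $f(\cdot;m,0)$ as a point mass, since the main use of the lemma in the proof of Theorem~\ref{thm: order F} only needs $c>0$.
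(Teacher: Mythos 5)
Your proof is correct and follows essentially the same route as the paper's: rewrite the truncated density, use $f(x;m,c)=\tfrac{1}{c}g((x-m)/c)$ together with the equivalence $x\in[ca+m,cb+m]\Leftrightarrow(x-m)/c\in[a,b]$, and apply the change of variables $v=(u-m)/c$ to the normalising integral to obtain $\tfrac{1}{c}g_{[a,b]}((x-m)/c)$. Your brief remark on the degenerate case $c=0$ is a small extra care the paper omits, but otherwise the arguments coincide.
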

\begin{proof} (Lemma \ref{lem: location and scale})
For all $x \in \mathcal{X}$, $a, b \in \mathbb{R}$, $(m, c) \in \mathbb{R} \times [0, \infty)$, we have
\begin{align*}
f_{[ca+m, cb+m]}(x; m, c) &= \frac{f(x; m, c) \mathbb{I}\{ x \in [ca+m, cb+m]\} }{\int_{ca+m}^{cb+m}  f(u; m, c) du } \end{align*}
Using  $f(x; m, c) = g((x-m)/c)/c$, $\mathbb{I}\{ x \in [ca+m, cb+m]\} = \mathbb{I}\{ (x-m)/c \in [a, b] \}$, and $\int_{ca+m}^{cb+m}  \frac{1}{c}g((u-m)/c) du = \int_{a}^{b}  g(u) du$ by change of variables, we end up with
\begin{align*}
f_{[ca+m, cb+m]}(x; m, c) &= \frac{g((x-m)/c)/c \mathbb{I}\{ (x-m)/c \in [a, b]\}}{\int_{a}^{b}  g(u) du} = \frac{1}{c} g_{[a, b]}((x-m)/c).
\end{align*}
Hence, $f_{[ca+m, cb+m]}(x; m, c)$ is a location-scale distribution with location $m$, scale $c$, base distribution $g_{[a, b]}$.
\end{proof}
Let $f^{\epsilon}_{[ca+m, cb+m]}(x; m, c)$ be the distribution of $Y$ defined in \eqref{eq: noisy observation}. Let $g^{\epsilon}_{a, b}$ be the distribution of $X_{0} + (b-a) V_{0}$ where $X_{0} \sim g_{[a, b]}$, $V_{0} \sim \text{Laplace}(1/\epsilon)$, and $X_{0}$ and $V_{0}$ are independent.
\begin{lemma} \label{lem: location and scale of noisy observation}
 Given $a, b \in \mathbb{R}$ and $\epsilon \in (0, \infty)$, the distribution family $\{ f^{\epsilon}_{[ca + m, cb + m]}(x; m, c): m \in \mathbb{R}, c \in [0, \infty) \}$ is a location-scale family with location $m$, scale $c$ and base distribution $g_{a, b}^{\epsilon}$.
\end{lemma}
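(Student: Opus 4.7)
The plan is to prove the lemma by a change-of-variables argument at the random-variable level, which sidesteps having to write the density of $Y$ directly: the clipped variable $T^{bc+m}_{ac+m}(X)$ has a mixed distribution with point masses at the two boundaries, so its density is awkward. Let $X \sim f(\cdot; m, c)$ and $V \sim \textup{Laplace}(1/\epsilon)$ be independent, with
\[
Y = T^{bc+m}_{ac+m}(X) + c(b-a) V.
\]
Standardise $X$ by setting $X' = (X - m)/c$; since $\{ f(\cdot; m, c) \}$ is a location-scale family with base density $g$, we have $X' \sim g$, independently of $V$.

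The single key observation is that clipping commutes with an orientation-preserving affine change of variable: for $c > 0$ and any $u \in \mathbb{R}$,
\[
T^{bc+m}_{ac+m}(c u + m) \;=\; c\, T^{b}_{a}(u) + m,
\]
which is immediate from $T^{r}_{l}(x) = \min\{\max\{x, l\}, r\}$ together with the monotonicity of $u \mapsto cu + m$. Applying this identity at $u = X'$ and substituting into the expression for $Y$ gives
\[
Y \;=\; m \,+\, c \bigl[ T^{b}_{a}(X') + (b-a) V \bigr] \;=\; m + c W,
\]
where $W := T^{b}_{a}(X') + (b-a) V$. Since $X' \sim g$ and $V \sim \textup{Laplace}(1/\epsilon)$ are independent, the law of $W$ depends only on $(g, a, b, \epsilon)$ and not on $(m, c)$; this is precisely the base distribution $g^{\epsilon}_{a,b}$ introduced just before the statement of the lemma. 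A one-dimensional change of variables then yields
\[
f^{\epsilon}_{[ca+m, cb+m]}(y; m, c) \;=\; \frac{1}{c}\, g^{\epsilon}_{a,b}\!\left( \frac{y - m}{c} \right),
\]
which is the required location-scale form with location $m$, scale $c$, and base $g^{\epsilon}_{a,b}$.

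There is no serious obstacle; the only subtlety is reconciling the random-variable description of the base law $W$ with the density-level description used in Lemma \ref{lem: location and scale}. Here the earlier lemma earns its keep: the clipped variable $T^{bc+m}_{ac+m}(X)$ is a mixture of point masses at $ac+m$ and $bc+m$, with weights $\Pr(X' \le a)$ and $\Pr(X' \ge b)$, together with a continuous component proportional to $f_{[ca+m,\, cb+m]}(\cdot; m, c)$. Lemma \ref{lem: location and scale} shows this continuous component equals $\tfrac{1}{c}\, g_{[a,b]}((x-m)/c)$, while the boundary masses scale by $c$ and shift by $m$ by inspection. Convolving each of the three pieces with the density of $c(b-a) V$ and collecting terms gives the same location-scale factorisation of $f^{\epsilon}_{[ca+m,\,cb+m]}(y; m, c)$ as above, thereby confirming the random-variable argument via explicit densities.
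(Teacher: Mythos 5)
Your proof is correct and takes essentially the same route as the paper's: standardise via $X' = (X-m)/c$, use the affine equivariance of clipping, $T_{ac+m}^{bc+m}(cu+m) = c\,T_{a}^{b}(u)+m$, and conclude $Y = m + cW$ with the law of $W$ free of $(m,c)$ --- this is precisely the paper's ``it can be checked that $Y = cY_{0}+m$'' step. Your closing reconciliation of the boundary point masses with the density-level statement is in fact slightly more careful than the paper, which appeals to Lemma \ref{lem: location and scale} (stated for conditional truncation) for the clipped variable without spelling out the atoms.
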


\begin{proof} (Lemma \ref{lem: location and scale of noisy observation})
By Lemma \ref{lem: location and scale}, the distribution of $T_{ac +m}^{bc + m}(X)$ is a location-scale distribution with location $m$, scale $c$, and base distribution $g_{a, b}$. For $Y$ in \eqref{eq: noisy observation},  it can be checked that $Y = c Y_{0} + m$ where  $Y_{0} = X_{0} + (b-a) V_{0}$, where $X_{0} = (T_{ac +m}^{bc + m}(X)-m)/c \sim g_{a, b}$, $V_{0} \sim \textup{Laplace}(1/\epsilon)$ and $X_{0}$ and $V_{0}$ are independent. Then, $Y_{0} \sim g^{\epsilon}_{a, b}$, which does not depend on $m$ and $c$. Hence we conclude.
\end{proof}
Finally, we proceed to the proof of Theorem \ref{thm: order F}. 
\begin{proof}(Theorem \ref{thm: order F})
Since the distribution of $Y$ is a location-scale distribution by Lemma \ref{lem: location and scale of noisy observation}, the Fisher information associated to it is given by $F^{\epsilon}_{ac +m, bc+m}(m, c) = \frac{1}{c^{2}}  F^{\epsilon}_{a, b}(0, 1)$, where $F^{\epsilon}_{a, b}(0, 1)$ is the Fisher information matrix associated with the base distribution $g_{a, b}^{\epsilon}(x)$ (for explicit formulae, see, e.g.\ \citet{Jones_and_Noufaily_2015}), and depends on $a$, $b$, and $\epsilon$, but not on $m$ and $c$. Therefore, if $\text{sc}(F_{a, b}^{\epsilon}(0, 1)) > \text{sc}(F_{a', b'}^{\epsilon}(0, 1))$ (resp.\ $<$, $=$), then $\text{sc}(F_{a, b}^{\epsilon}(m, c)) > \text{sc}(F_{a', b'}^{\epsilon}(m, c))$  (resp.\ $<$, $=$) for any other $(m, c) \in \mathbb{R} \times (0, \infty)$.
\end{proof}

%

\section{Supplementary algorithms} \label{sec: Supplementary algorithms}
Algorithm \ref{alg: MCMC} presents an MCMC move for the rejuvenation step of SMC at time $t$. 

\begin{algorithm}[H]
\caption{MCMC for $p^{\epsilon}_{s_{1:t}}(\theta, x_{1:t} | y_{1:t})$ - a single update}
\label{alg: MCMC}
\KwIn{The current sample $(x_{1:t}, \theta)$, proposal distributions $q(x' | x)$ and $q(\theta' | \theta)$, $\epsilon$}
\KwOut{The new sample}
\textbf{MH update for $x_{1:t}$:}\\
\For{$k = 1:t$}{
Sample $x'_{k}  \sim q(x_{k}' |x_{k})$ and return $x'_{k}$ as the new sample w.p.\ 
\[
\min \left\{ 1, \frac{p_{\theta}(x'_{k}) \textup{Laplace}(y_{k} - s_{k}(x'_{k}), \Delta s_{k}/\epsilon) q(x_{k} | x_{k}')}{p_{\theta}(x_{k}) \textup{Laplace}(y_{k} - s_{k}(x_{k}), \Delta s_{k}/\epsilon) q(x'_{k} | x_{k})} \right\};
\]
otherwise return $x_{k}$ as the new sample.
}
\textbf{MH update for $\theta$:} Sample $\theta' \sim q(\theta' | \theta)$ and return $\theta'$ as the new sample w.p.\ 
\[
\min \left\{ 1, \frac{q(\theta | \theta') \eta(\theta') \prod_{k =1}^{t} p_{\theta'}(x_{k})}{q(\theta' | \theta) \eta(\theta) \prod_{k =1}^{t} p_{\theta}(x_{k}) } \right\};
\]
otherwise, return $\theta$ as the new sample.
\end{algorithm}

Algorithm \ref{alg: Monte Carlo calculation of the gradient} approximates Fisher's identity for the score vector, 
\[
\nabla_{\theta} \log p_{\theta, s}^{\epsilon}(y) = \int \nabla \log p_{\theta}(x) p_{\theta, s}^{\epsilon}(x | y) \mathrm{d}x,
\]
using self-normalised importance sampling \citep{geweke_1989} with a sample of size $N$ from $\mathcal{P}_{\theta}$.

\begin{algorithm}[H]
\caption{Monte Carlo calculation of the gradient}
\label{alg: Monte Carlo calculation of the gradient}
\KwIn{Parameter $\theta$, observation $y$, DP parameter $\epsilon$, truncation points $l, r$}
\KwOut{Gradient vector $\widetilde{\nabla_{\theta} \log p_{l, r}^{\epsilon}(y | \theta)}$}
\For{$i = 1, \ldots, N$}{
Sample $x^{(i)} \sim \mathcal{P}_{\theta}$,\\
Calculate $w^{(i)} = \textup{Laplace}(y_{k} - T_{l}^{r}(x^{(i)}), \Delta s_{k}/\epsilon)$.
}
Calculate the (approximate) gradient as 
\[
\widetilde{\nabla_{\theta} \log p_{l, r}^{\epsilon}(y | \theta)} = \frac{\sum_{i = 1}^{N} w^{(i)} \nabla_{\theta} \log p_{\theta}(x^{(i)})}{\sum_{i = 1}^{N} w^{(i)}}.
\]
\end{algorithm}

\end{appendix}

\end{document}